\newcommand\replies[1]{\textcolor{blue}{#1}}
\newcommand\JP[1]{\textcolor{cyan}{#1}}
\newcommand\JB[1]{\textcolor{orange}{#1}}
\newcommand{\R}{\mathbb{R}}
\newcommand{\eps}{\varepsilon}
\newcommand{\supp}{\text{supp}\,}
\def\abs#1{\left|#1\right|}
\def\norm#1{\left\|#1\right\|}
\newcommand{\coloneqq}{\mathrel{\mathop:}=}
\newcommand{\Am}{\mathcal{A}}
\newcommand{\A}{\mathcal{A}}
\newcommand{\Ll}{{L}}
\newcommand{\B}{{B}}
\newcommand{\Lls}{\mathcal{L}}
\newcommand{\Eb}{\mathbf{E}}
\newcommand{\Mb}{\mathbf{M}}
\newcommand{\Wb}{\mathbf{W}}
\newcommand{\Ab}{\mathbf{A}}
\newcommand{\Pb}{\mathbf{P}}
\newcommand{\Sb}{\mathbf{S}}
\newcommand{\Ib}{\mathbf{I}}
\newcommand{\Lb}{\mathbf{L}}
\newcommand{\Db}{\mathbf{D}}
\newcommand{\Kb}{\mathbf{K}}
\newcommand{\Ms}{\mathcal{M}}
\newcommand{\Ns}{\mathcal{N}_s}
\newcommand{\La}{L}
\newcommand{\Laa}{\mathcal{L}}
\newcommand{\M}{{M}}
\newcommand{\tM}{\widetilde{\mathcal{M}}}
\newcommand{\hM}{\widehat{\mathcal{M}}}
\newcommand{\N}{{N}}
\newcommand{\Y}{{Y}}
\newcommand{\U}{{U}}
\newcommand{\cc}{{c}_0}
\newcommand{\V}{\mathcal V}
\newcommand{\Pp}{{P}}
\newcommand{\E}{\mathcal E}
\newcommand{\D}{\mathcal D}
\newcommand{\K}{\mathcal K}
\newcommand{\npinns}{{\sc nPINNs} \ }
\renewcommand{\d}{\,\mathrm d}
\newcommand{\dy}{\mathrm d y}
\newcommand{\dx}{\mathrm d x}
\newcommand{\ddx}{\frac{\mathrm d}{\dx}}
\newcommand{\by}{\boldsymbol{y}}
\newcommand{\bp}{\boldsymbol{p}}
\newcommand{\bv}{\boldsymbol{v}}
\renewcommand{\bf}{\boldsymbol{f}}
\newcommand{\bu}{\boldsymbol{u}}
\newtheorem{theorem}{Theorem}
\newcommand{\Xdata}{X_{\text{data}}}
\newcommand{\pinn}{\textsc{PINN}}
\newcommand{\pinns}{\textsc{PINNs}}
\newcommand{\adam}{\textsc{ADAM}}
\newcommand{\lbfgs}{\textsc{L-BFGS}}
\newcommand{\fvm}{\textsc{FVM}}
\newcommand{\graphPINN}{\textsc{graphPINN}}
\newcommand{\edgePINN}{\textsc{edgePINN}}
\newcommand{\martin}[1]{\textcolor{violet}{[\textbf{Martin:} #1]}}
\newcommand{\janP}[1]{\textcolor{red}{[\textbf{JanP:} #1]}}
\newcommand{\janB}[1]{\textcolor{blue}{[\textbf{JanB:} #1]}}
\newcommand{\maxW}[1]{\textcolor{orange}{[\textbf{Max:} #1]}}
\newcommand{\tomR}[1]{\textcolor{olive}{[\textbf{Tom:} #1]}}
\newtheorem{thm}{Theorem}[section]
\newcommand*{\email}[1]{\href{mailto:#1}{\detokenize{#1}}}
\definecolor{orcidlogocolor}{HTML}{A6CE39}
\newcommand*{\orcid}[1]{\href{https://orcid.org/#1}{ORCID~\detokenize{#1}}}
\begin{document}

	\begin{frontmatter}

\begin{fmbox}
	\dochead{Research}
	
	
	\title{A comparison of PINN approaches for drift-diffusion equations on metric graphs}
	
	
	\author[
	addressref={aff1},                   
	email={jan.blechschmidt@math.tu-chemnitz.de}   
	]{\inits{T}\fnm{Jan} \snm{Blechschmidt}}
	\author[
	addressref={aff1},                   
	email={jfpietschmann@math.tu-chemnitz.de}   
	]{\inits{T}\fnm{Jan-F.} \snm{Pietschmann}}
	\author[
	addressref={aff1},                   
	email={tom-christian.riemer@math.tu-chemnitz.de}   
	]{\inits{T}\fnm{Tom-C.} \snm{Riemer}}
	\author[
	addressref={aff1},                   
	corref={aff1},                       
	email={martin.stoll@math.tu-chemnitz.de}   
	]{\inits{T}\fnm{Martin} \snm{Stoll}}
	\author[
	addressref={aff1},                   
	email={max.winkler@math.tu-chemnitz.de}   
	]{\inits{T}\fnm{Max} \snm{Winkler}}

	
	\address[id=aff1]{
		\orgdiv{Department of Mathematics},             
		\orgname{TU Chemnitz},          
		\city{Chemnitz},                              
		\cny{Germany}                                    
	}
	
	
\end{fmbox}


\begin{abstractbox}
	
	\begin{abstract} 
 In this paper we focus on comparing machine learning approaches for quantum graphs, which are metric graphs, i.e., graphs with dedicated edge lengths, and an associated differential operator. In our case the differential equation is a drift-diffusion model. Computational methods for quantum graphs require a careful discretization of the differential operator that also incorporates the node conditions, in our case Kirchhoff-Neumann conditions. Traditional numerical schemes are rather mature but have to be tailored manually when the differential equation becomes the constraint in an optimization problem. Recently, physics informed neural networks (\pinns) have emerged as a versatile tool for the solution of partial differential equations from a range of applications. They offer flexibility to solve parameter identification or optimization problems by only slightly changing the problem formulation used for the forward simulation. We compare several \pinn\ approaches for solving the drift-diffusion on the metric graph.   
	\end{abstract}
	
	
	\begin{keyword}
		\kwd{PINN}
		\kwd{drift-diffusion}
		\kwd{metric graph}
	\end{keyword}
	
	
\end{abstractbox}
%

	\end{frontmatter}
	
	
	
\section{Introduction}
Dynamic processes on networks (graphs) \cite{newman2018networks,barabasi2013network} are crucial for understanding complex phenomena in many application areas. 
Here we focus on metric graph where each edge can be associated with an interval thus allowing the definition of differential operators (quantum graphs) \cite{lagnese2012modeling,berkolaiko2013introduction}.
Numerical methods for such graphs have gained recent interest \cite{arioli2018finite, gyrya2019explicit, stoll2021optimal} both for simulation and optimal control. As the structure of the network can possibly become rather complex efficient schemes 
such as domain decomposition methods \cite{leugering2017domain} 
are often needed.

In this paper we investigate the use of the recently introduced physics-informed neural networks (\pinns). In their seminal paper \cite{raissi2019physics} Raissi and co-authors introduce the idea of utilizing modern machine learning ideas as well as the computational frameworks such as TensorFlow \cite{tensorflow2015-whitepaper} to solve differential equations. Many authors have extended the applicability of the \pinn\ approach to different application areas  \cite{zhu2019physics,mao2020physics,jin2021nsfnets,sahli2020physics}
including fluid dynamics \cite{raissi2018hidden,MAO2020112789,lye2020deep,magiera2020constraint,wessels2020neural}, continuum mechanics and elastodynamics \cite{haghighat2020deep,nguyen2020deep,rao2020physics},
inverse problems \cite{meng2020composite,jagtap2020conservative},
fractional advection-diffusion equations
\cite{pang2019fpinns},
stochastic advection-diffusion-reaction equations
\cite{chen2019learning},
stochastic differential equations
\cite{yang2020physics} and
power systems
\cite{misyris2020physics}.
\textsc{XPINN}s (eXtended \pinns) are introduced in~\cite{JagtapKardiadakis2020} as a generalization of \pinns\ involving multiple neural networks allowing for parallelization in space and time via domain decomposition, see also \cite{Heinlein2021} for a recent review on machine learning approaches in domain decomposition. 

Our goal is to illustrate that the \pinn\ framework can also be applied to the case of an drift-diffusion equation posed on a metric graph. Those models are used in many application areas ranging from semiconductor modeling \cite{natalini1996bipolar}, solar cells \cite{hwang2009drift} to modeling electrical networks, cf. \cite{hinze2011pod}, and thus serve as a relevant and sufficiently complex test case.

Classical approaches
are Galerkin finite element methods \cite{brenner2008mathematical,Egger2020} to discretize of the governing equations. In particular, discontinuous Galerkin finite element method (DG) \cite{cockburn2012discontinuous} or finite volume methods \cite{eymard2000finite} are often the method of choice given they are guaranteeing local conservation of the flows. 
We use such a finite volume scheme to obtain a reference solution.


We implement and compare several \pinn-based machine learning approaches. These become very attractive numerical schemes for PDEs as they are very flexible when tasked with solving parameter identification tasks. Something that is notoriously difficult in the context of finite element methods as for these schemes a careful discretization of the optimality conditions often requires rather technical derivations as well as the solution of complex systems of linear equations associated with the first order conditions or linearizations thereof \cite{hinze2008optimization}.

We start with a brief introduction to the drift-diffusion models in Section \ref{sec:DD_on_graphs}. In Section \ref{sec::pinns} we introduce the several different \pinn\ setups by detailing their architectures and the corresponding loss functions that then reflect all the intricate relations within the network. In Section \ref{sec::fvm} we briefly review a finite volume discretization of the drift diffusion to which we compare the \pinn-based methods. This and other comparisons are given in Section \ref{sec::results} where we focus on a number of \pinn\ approaches and their numerical realization.
\section{Drift-diffusion equations on metric graphs}\label{sec:DD_on_graphs}

Let us introduce our notion of a metric graph in more detail. A metric graph is a directed graph that consists of a set of vertices $\V$ and edges $\E$ connecting a pair of vertices denoted by $(v_e^o,v_e^t)$ where $v_e^o,v_e^t\in \V$. Here $v_e^o$ stands or the vertex at the origin while $v_e^t$ denotes the terminal vertex. In contrast to combinatorial graphs a length $\ell_e$ is assigned to each edge $e\in \E.$ Thus identifying each edge with a one-dimensional interval allows for the definition of differential operators. We also introduce a normal vector $n_e(v)$ defined as $n_e(v_e^o) = -1$ and $n_e(v_e^t) = 1$. 
To prescribe the behavior at the boundary of the graph, we first subdivide the set of vertices $\V$ into the interior vertices $\mathcal{V}_\mathcal{K}$ and the exterior vertices $\mathcal{V}_\mathcal{D}$ as follows
\begin{itemize} 
	\item the set of interior vertices $v \in \mathcal{V}_\mathcal{K} \subset \mathcal{V}$, contains all vertices that are incident to at least one incoming edge and at least one outgoing edge (i.e. $\forall v \in \mathcal{V}_\mathcal{K} \; \exists \ e_1, e_2 \in \mathcal{E}$ such that $v^{\operatorname{t}}_{e_1} = v$ and $v^{\operatorname{o}}_{e_2} = v$) 

	\item the set of exterior vertices $v \in \mathcal{V}_\mathcal{D} \coloneqq \mathcal{V} \setminus \mathcal{V}_\mathcal{K}$, contains vertices to which either only incoming or only outgoing edges are incident (i.e. either $v^{\operatorname{t}}_{e} = v$ or $v^{\operatorname{o}}_{e} = v$ holds $\forall e \in \mathcal{E}_v$)
	
\end{itemize}

One possible application of such a setting could be a road network as depicted in Figure~\ref{fig8}.
\begin{figure}
    \begin{center}
        \begin{subfigure}[b]{0.3\textwidth}
            \begin{center}
                \includegraphics[scale=0.30]{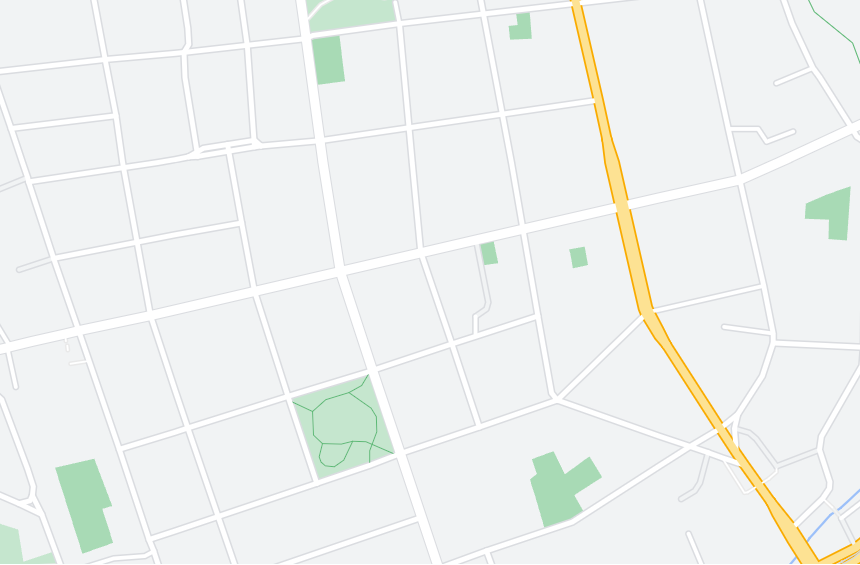}
            \end{center}
            \caption{A street network in Chemnitz, Saxony, Germany. Image from Google maps. Coordinates of the central point: $50.83$, $12.90$.}
            \label{fig8:f1}
        \end{subfigure}\hspace{15mm}
        \begin{subfigure}[b]{0.5\textwidth}
            \begin{center}
                \includegraphics[scale=0.16]{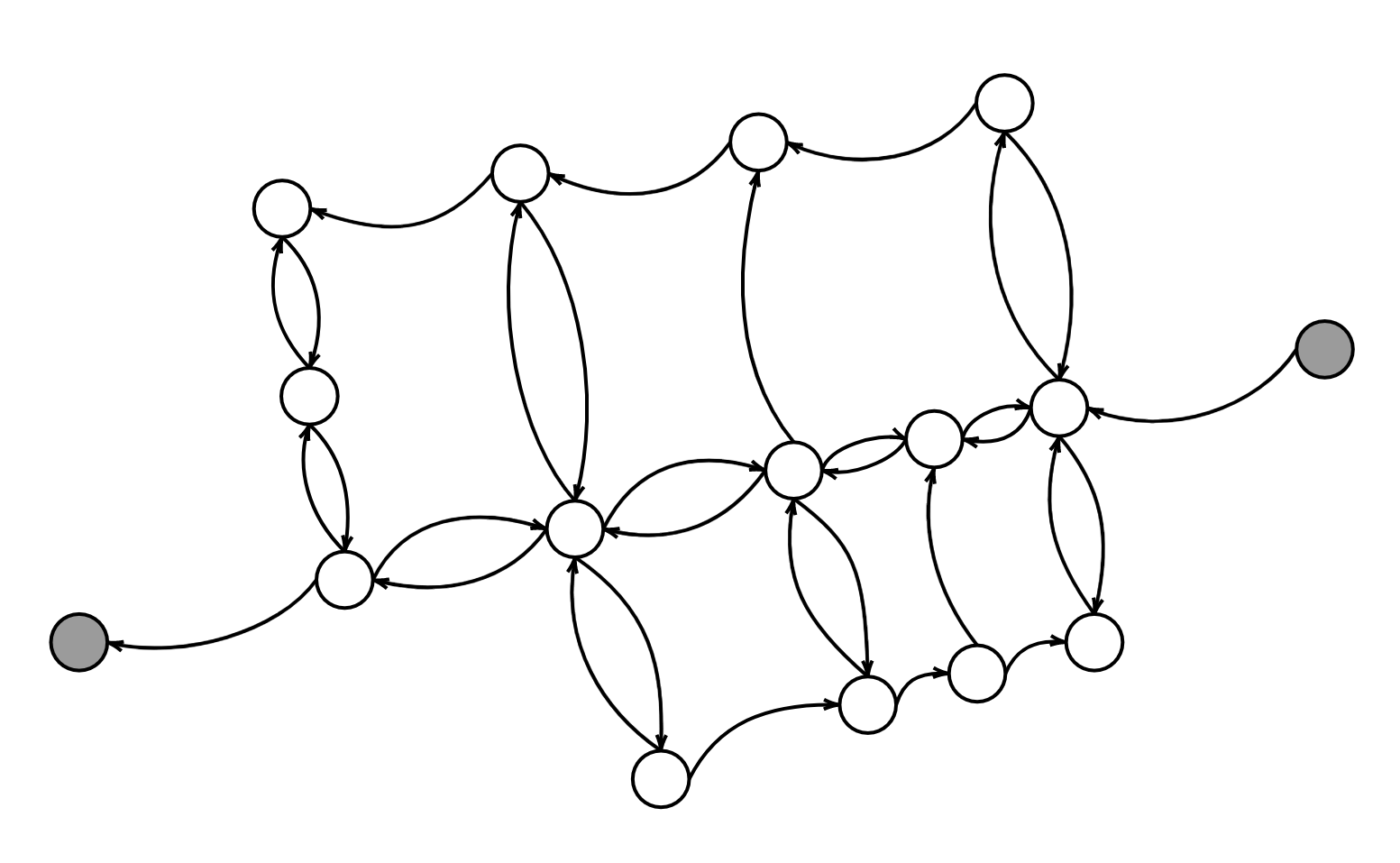}
            \end{center}
            \caption{A metric graph modeling a compact road network within the left road network. Empty circles are interior vertices while filled ones depict exterior ones.}
            \label{fig8:f2}
        \end{subfigure}
    \end{center}
    \caption{A road network map and a modeling metric graph.}
    \label{fig8}
\end{figure}

On a metric graph, it makes sense to consider differential operators defined on each edge, and we focus on non-linear drift-diffusion equations
\begin{equation}
    \label{eq:strong_pde}
    \partial_t\rho_e  = \partial_x (\eps\,\partial_x\rho_e - f(\rho_e)\,\partial_x V_e), \quad e \in \E,
\end{equation}
where $\rho_e : e \times (0,T) \to \R_+$ describes, on each edge, the concentration of some quantity while $V_e: e \times (0,T) \to \R_+$ is a given potential and $\eps > 0$ a (typically small) diffusion constant. Furthermore, $f: \R_+ \to \R_+$ satisfies $f(0) = f(1) = 0$. This property ensures that solutions satisfy $0 \le \rho_e \le 1$ a.e. on each edge, see Theorem \ref{thm:existence}. By identifying each edge with an interval $[0,\ell_e]$, we define the flux as
\begin{align} \label{eq:flux}
    J_e(x) := - \eps\,\partial_x \rho_e (x) + f(\rho_e(x))\,\partial_x V_e(x).
\end{align}
A typical choice for $f$ used in the following is $f(\rho_e) = \rho_e(1-\rho_e)$.

To make \eqref{eq:strong_pde} a well-posed problem, we need to add initial-conditions as well as coupling conditions on the vertices. First we impose on each edge $e \in \mathcal{E}$ the following initial condition
\begin{equation}
    \label{eq:initial_conditions}
    \rho_e \left( 0,x \right)  = \rho_{e, 0} \left( x \right) ,
\end{equation}
where $\rho_{e, 0} \in L^2 \left( e \right) $ returns the given density on each point of the edge $e$ at the start time of the observation $t=0$.

For vertices $v\in \V_\K\subset \V$, we apply \emph{homogeneous Neumann--Kirchhoff conditions}, i.\,e., there holds
\begin{equation}
    \label{eq:Kirchhoff_Neumann_condition}
    \sum_{e\in \E_v}J_e(v)\,n_e (v)=0
\end{equation}
with $\E_v$ the edge set incident to the vertex $v$.
Additionally, we ask the solution to be continuous over vertices, i.e.
\begin{align}
    \label{eq:continuity_condition}
    \rho_e(v) = \rho_{e'}(v) \quad \text{ for all }v \in \V_\K,\; e,\,e' \in \E_v.
\end{align}
In vertices $v\in \V_\D:=\V\setminus \V_\K$ the solution $\rho$ fulfills \emph{flux boundary conditions}
\begin{equation}
    \label{eq:Dirichlet_conditions}
    \sum_{e\in \E_v}J_e(v) n_e (v)=-\alpha_v(t) (1-\rho_v) + \beta_v(t) \rho_v,
\end{equation}
where
$\alpha_v:(0,T) \to \R_+, \, \beta_v : (0,T) \to \R_+$, ${v \in \V_\D}$,
are functions prescribing the rate of influx of mass into the network ($\alpha_v$) as well as the velocity of mass leaving the network ($\beta_v$) at the boundary vertices. Note that this choice ensures that the bounds $0 \le \rho_e \le 1$ are preserved. In typical situations, boundary vertices are either of influx- or of outflux type, i.e. $\alpha_v \beta_v = 0$ for all $v \in \V_\D$.

The Kirchhoff-Neumann conditions are the natural boundary conditions for the differential operator \eqref{eq:strong_pde}, as they ensure that mass enters or leaves the system only via the boundary nodes $\V_\D$ for which either $\alpha_v$ or $\beta_v$ is positive.

Having introduced the complete continuous model, we state the following existence and uniqueness result, which can be obtained by combining the proofs of \cite{Burger2020,Egger2020}.
\begin{thm}\label{thm:existence}
Let the initial data $\rho_0 \in L^2(\Gamma)$ satisfy $0 \le \rho_0 \le 1$
a.e.\ on $\E$ and let nonnegative functions $\alpha_v, \beta_v \in L^\infty(0,T)$, $v\in \V_D$ be given. Then there exists a unique weak solution $\rho \in L^2(0,T; H^1(\Gamma)) \cap H^1(0,T; H^1(\Gamma)^*)$ s.t.
    \begin{align*}
        &\sum_{e \in \E} \left(\int_e  \partial_t \rho_e\,\varphi_e \;dx +
        \int_e (\varepsilon\,\partial_x \rho_e -
        f(\rho_e)\,\partial_x V_e)\,\partial_x \varphi_e \;dx\right)\\
        &\qquad + \sum_{v \in \V_D} (-\alpha_v(t) (1-\rho_v) + \beta_v(t) \rho_v)\varphi(v) = 0,
    \end{align*}
    for all test functions $\varphi \in H^1(\Gamma)$. Here $L^2$ denotes the space of square integrable functions. The space $H^1$ denotes the space of functions for which also the weak derivative is bounded in $L^2$ and with $(H^1)^*$ its dual space. The Bochner spaces contain time-dependent functions where for $u=(x,t)$ to belong to, e.g. $L^2(0,T; H^1(\Gamma))$, the norm
    $$
    \int_0^T \|u(\cdot, t)\|_{H^1(\Gamma)}^2\;dx 
    $$
    has to be finite.
\end{thm}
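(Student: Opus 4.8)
The plan is to follow the by-now standard Galerkin-plus-fixed-point strategy for nonlinear parabolic problems, adapted to the metric-graph setting, combining the linear graph theory of \cite{Egger2020} with the nonlinear drift estimates of \cite{Burger2020}. First I would fix the functional-analytic frame: the continuity condition \eqref{eq:continuity_condition} is built into the space $H^1(\Gamma)$ (functions that lie in $H^1$ on each edge and agree at the interior vertices), so that $H^1(\Gamma)\hookrightarrow L^2(\Gamma)\hookrightarrow H^1(\Gamma)^*$ forms a Gelfand triple and the Kirchhoff--Neumann condition \eqref{eq:Kirchhoff_Neumann_condition} becomes the natural condition automatically encoded in the weak formulation. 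Since on each edge $H^1(e)\hookrightarrow C(\bar e)$, the point values $\rho_v=\rho_e(v)$ entering the boundary term \eqref{eq:Dirichlet_conditions} are well defined and depend continuously on $\rho$, so that the map $\rho\mapsto\sum_{v\in\V_\D}(-\alpha_v(1-\rho_v)+\beta_v\rho_v)\varphi(v)$ is a bounded bilinear form on $H^1(\Gamma)$.

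Second, to handle the nonlinearity I would replace $f$ by a globally bounded, globally Lipschitz truncation $\tilde f$ with $\tilde f=f$ on $[0,1]$ and $\tilde f(s)=0$ for $s\notin[0,1]$ (so that $\tilde f(0)=\tilde f(1)=0$ is retained). For the truncated problem, a Galerkin scheme in a countable basis of $H^1(\Gamma)$ (edgewise finite element or spectral basis) produces approximants $\rho_N$; testing the Galerkin equations with $\rho_N$, using coercivity of the $\eps$-diffusion term, boundedness of $\tilde f$ together with $\partial_x V_e\in L^\infty$ (a regularity assumption on the data), the trace/embedding estimate for the boundary terms, and Gronwall, yields a uniform bound in $L^2(0,T;H^1(\Gamma))\cap L^\infty(0,T;L^2(\Gamma))$; the equation then gives a uniform bound on $\partial_t\rho_N$ in $L^2(0,T;H^1(\Gamma)^*)$. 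Aubin--Lions--Simon provides a subsequence converging strongly in $L^2(0,T;L^2(\Gamma))$ and a.e., so $\tilde f(\rho_N)\to\tilde f(\rho)$ strongly by continuity and boundedness; the remaining terms are linear and pass to the limit weakly, giving a weak solution of the truncated problem with the stated regularity.

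Third, I would establish the invariant-region bound $0\le\rho\le1$ by a Stampacchia argument. Testing the truncated equation with $(\rho-1)_+\in L^2(0,T;H^1(\Gamma))$, the diffusion term contributes $\eps\|\partial_x(\rho-1)_+\|^2\ge0$, the drift term vanishes on $\{\rho>1\}$ because $\tilde f\equiv0$ there, and the boundary term contributes $\sum_{v\in\V_\D}\big(\alpha_v(1-\rho_v)-\beta_v\rho_v\big)(\rho_v-1)_+\le 0$ by the sign structure of \eqref{eq:Dirichlet_conditions} (at $\rho_v\ge1$ the influx term is non-positive and the outflux term has the favourable sign); with $(\rho_0-1)_+=0$ and Gronwall this forces $(\rho-1)_+\equiv0$. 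The bound $\rho\ge0$ follows symmetrically by testing with $\min(\rho,0)$. Since $\rho$ then takes values in $[0,1]$, where $\tilde f=f$, it solves the original problem.

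Finally, for uniqueness I would take two such solutions $\rho_1,\rho_2$ with values in $[0,1]$, subtract the weak formulations, and test with $\rho_1-\rho_2$: the diffusion term gives $\eps\|\partial_x(\rho_1-\rho_2)\|^2$, the drift difference is controlled via $|f(\rho_1)-f(\rho_2)|\le L|\rho_1-\rho_2|$ (Lipschitz continuity on the compact interval $[0,1]$) together with $\partial_x V\in L^\infty$, the boundary contributions are monotone in $\rho_v$ and hence of favourable sign, and Young's inequality followed by Gronwall yields $\rho_1=\rho_2$. The step I expect to be the main obstacle is the bookkeeping at the vertices: making the trace estimates, the sign of the Kirchhoff and flux boundary contributions, and the compactness argument interact cleanly once the per-edge estimates are assembled through the coupling conditions; each individual ingredient is classical, but verifying that they survive the passage to a genuinely coupled metric graph is exactly the content imported from \cite{Egger2020,Burger2020}.
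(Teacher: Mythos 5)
Your proposal is sound and matches the paper's approach: the paper gives no proof of Theorem~\ref{thm:existence} itself, stating only that the result ``can be obtained by combining the proofs of'' the cited works of Burger--Humpert--Pietschmann and Egger--Sch\"obel-Kr\"ohn, and your Galerkin/truncation/Stampacchia/Gronwall outline is precisely the combination those references carry out (the graph-side functional setting and Gelfand triple from the latter, the flux boundary conditions and invariant region $0\le\rho\le1$ from the former). The vertex bookkeeping you flag as the main obstacle --- continuity encoded in $H^1(\Gamma)$, Kirchhoff--Neumann as the natural condition, the sign $(\alpha_v+\beta_v)(\rho_{1,v}-\rho_{2,v})^2\ge 0$ of the boundary terms in the uniqueness step --- checks out exactly as you describe it.
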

	
    We note that through \eqref{eq:strong_pde} we obtain the following differential operator defined on the metric graph $\Gamma$
	\begin{equation} 
		\label{eq:Hamiltonian}
		\mathcal{H} [\rho_e]  \left( t,x \right)  \coloneqq \partial_t \rho_e  \left( t,x \right)   - \partial_x  \left( \varepsilon \partial_x \rho_e  \left( t,x \right)  - f \left( \rho_e  \left( t,x \right)   \right)  \partial_x V_e  \left( t,x \right)  \right) ,
	\end{equation}
	and together with the above mentioned initial and vertex conditions we have obtained the quantum graph that we want to tackle with our \pinn\ approaches.
\section{Physics-informed neural networks on graphs} \label{sec::pinns}

In recent decades, machine learning methods using deep neural networks proved to be a versatile tool for approximating arbitrary functions.
This motivated their use for the approximation of solutions to (partial) differential equations \cite{raissi2019physics}.
The classical approach of supervised learning trains a neural network purely using labeled data, only.
Two challenges may arise in this situation. First, when analyzing complex physical, biological or technical systems, the cost of data acquisition is often prohibitively expensive and one is faced with the challenge of drawing conclusions and making decisions based on incomplete information. For the resulting small amount of data, most of the modern machine learning methods are not robust enough and offer no convergence guarantees. Second, neural networks do not take the physical properties underlying the system into account. Without this prior information, the solution is often not unique and may lose physical correctness. Adding this prior information as an additional constraint to the learning problem restricts the space of admissible solutions.

\pinns{} are universal function approximators, which are trained using unlabeled data, but in addition utilize the governing physical equation as an additional constraint in the learning phase. In turn, encoding such structured information in a learning algorithm leads to an increase in the information content of the data. This ensures that the learning quickly moves towards the solution and then generalizes well.

Moreover, \pinns\ can be used for data-driven discovery of partial differential equations or system identification, simply by adding a data fidelity term to the loss functional (cf. \cite[pp.~1-2]{RaissiPerdikarisKarniadakis2017}).
However, we concentrate on the problem of computing data-driven approximate solutions to the non-linear PDE system given by \eqref{eq:strong_pde} under the given initial and vertex conditions.


In the following, we present three different approaches to use \pinns\ to approximate the solution of \eqref{eq:strong_pde} under the initial and vertex conditions. The first two correspond to the continuous time model approach and the last to the discrete time model approach, both presented in \cite{RaissiPerdikarisKarniadakis2017}. As the domain in the metric graph case is more structured than a standard two- or three-dimensional domain, we exploit this given structure for constructing these three \pinn\ approaches.

\subsection{Two continuous time models}

The idea of the continuous time model \pinn\ approach is based on constructing a residual function from \eqref{eq:strong_pde} for each edge $e \in \mathcal{E}$ of the graph, i.e. 
\begin{equation}
	\label{eq:residual_function}
	r_{e} \left( t,x \right)=\partial_t \rho_{e} \left( t,x \right) - \partial_x   \left(  \varepsilon \partial_x  \rho_{e} \left( t,x \right) - f \left( \rho_{e} \left( t,x \right) \right) \partial_x V \left( t,x \right) \right),
\end{equation}
and on approximating the unknown solution $\rho_e$ of \eqref{eq:strong_pde} on each edge $e \in \mathcal{E}$ by a neural network $\rho_{\theta_e} \colon \mathbb{R}^2 \to \mathbb{R}$. Here, $\theta_e$ denotes the trainable parameters of the solution $\rho_e$ on an individual edge $e \in \mathcal{E}$. 

Substituting the neural network $\rho_{\theta_e}$ into \eqref{eq:residual_function} results in the definition of a so-called residual network for each edge $e \in \mathcal{E}$, which is given by 
\begin{equation}
	\label{eq:residual_network}
	r_{\theta_e} \left( t,x \right)=\partial_t \rho_{\theta_e} \left( t,x \right) - \partial_x   \left(  \varepsilon \partial_x  \rho_{\theta_e} \left( t,x \right) - f \left( \rho_{\theta_e} \left( t,x \right) \right) \partial_x V \left( t,x \right) \right).
\end{equation}
To obtain the residual network $r_{\theta_e}$ from $\rho_{\theta_e}$, one exploits the techniques of automatic differentiation to differentiate $\rho_{\theta_e}$ with respect to its input values $t$ and $x$. We note that this residual network $r_{\theta_e} \colon \mathbb{R}^2 \to \mathbb{R}$ is in fact also a neural network and has the same set of trainable parameters $\theta_e$ as the approximating network $\rho_{\theta_e}$. In the learning phase, the network $\rho_{\theta_e}$ is trained to satisfy \eqref{eq:strong_pde} by minimizing the norm of the residual network $r_{\theta_e}$ with respect to the trainable parameters $\theta_e$ over a set of collocation points.

In order to train each neural network $\rho_{\theta_e}$ to fulfill the initial, coupling and boundary conditions \eqref{eq:initial_conditions}--\eqref{eq:Dirichlet_conditions}, further misfit terms must be constructed and incorporated into the cost function. There are several possibilities to define these terms as well as how to combine them to one or more cost functions.

We present two continuous time model \pinn\ approaches, which differ in their methodology for training the neural networks $\rho_{\theta_e}$. On the one hand, this difference results from the definition of the cost functions, into which the residual network and the relevant misfit terms are incorporated, and on the other hand, they differ in the resulting consideration of the trainable parameters to be minimized and the sequential process in the learning phase, simultaneously or alternatingly. We give a detailed description of each approach in the following.

	\subsection{\graphPINN\ approach}
	\label{subsec:graphPINN}

	For this approach, we construct one single cost function approximating the problem on all edges simultaneously, including initial, boundary and coupling conditions.
	Treating the complete problem with one single cost function, rather than optimizing each network on each edge separately, motivates the name \graphPINN.
	This single cost function consists of 
	the following misfit terms:

	\begin{itemize}
		\item \emph{Residual misfit term}: using \eqref{eq:residual_network} we enforce the structured information imposed by \eqref{eq:strong_pde} for each individual edge $e \in \mathcal{E}$ via \begin{equation} \label{eq:misfit_residual} \phi_{e,r}  \left( X_e \right) \coloneqq \frac{1}{n_e} \sum_{i=1}^{n_e} r_{\theta_e}  \left( t_e^i, x_e^i  \right)^2, \end{equation} where $X_e = \left\{ \left( t_e^i, x_e^i \right) \right\}_{i=1}^{n_e} \subset \left( 0, T \right) \times \left[0, \ell_e\right]$ is a set of time-space collocation points that are drawn randomly or chosen equidistantly. 
		\item \emph{Kirchhoff-Neumann misfit term}: to enforce the Kirchhoff-Neumann conditions, \eqref{eq:Kirchhoff_Neumann_condition}, we define the following misfit term for each interior vertex $v \in \mathcal{V}_\mathcal{K}$ \begin{equation} \label{eq:misfit_Kirchhoff} \phi_{v,K}  \left( X_{v,b} \right) \coloneqq \frac{1}{n_b} \sum_{i=1}^{n_b}  \left( \sum_{e \in \mathcal{E}_v}  J_{\theta_e}\left( t_{v,b}^i, v \right)  n_e  \left( v \right) \right)^2,  \end{equation} with \begin{equation} \label{eq:misfit_flux} J_{\theta_e}\left( t_{v,b}^i, v \right) = \left( - \varepsilon \partial_x \rho_{\theta_e}  \left( t_{v,b}^i, v \right) + f \left( \rho_{\theta_e}  \left( t_{v,b}^i, v \right) \right) \partial_x V_e \left( t_{v,b}^i, v \right) \right), \end{equation} where $X_{v,b} = \left\{ t_{v,b}^i \right\}_{i=1}^{n_b} \subset \left( 0,T \right)$ is a set of time snapshots where the Kirchhoff-Neumann conditions are enforced. The values $\left\{ \rho_{\theta_e}  \left( t_{v,b}^i, v \right) \right\}_{i=1}^{n_b}$ are either equal to $\left\{ \rho_{\theta_e}  \left( t_{v,b}^i, 0 \right) \right\}_{i=1}^{n_b}$ if the interior vertex $v \in \mathcal{V}_\mathcal{K}$ is an origin vertex of the edge $e$ (i.e. $\operatorname{o}(e) = v$), or equal to $\left\{ \rho_{\theta_e}  \left( t_{v,b}^i, \ell_e \right) \right\}_{i=1}^{n_b}$ if $v$ is a terminal vertex of the edge $e$ (i.e. $\operatorname{t}(e) = v$). Of course this also applies to the values $\left\{ \partial_x \rho_{\theta_e}  \left( t_{v,b}^i, v \right) \right\}_{i=1}^{n_b}$. We note that the derivatives are always taken into the outgoing direction.
		\item \emph{Continuity misfit term}: in order to enforce the continuity of $\left\{\rho_{\theta_e} \right\}_{e \in \mathcal{E}}$ in the interior vertices $\mathcal{V}_\mathcal{K}$, as required by \eqref{eq:continuity_condition}, we introduce two different misfit terms to accomplish this. Both misfit terms enforce by minimization that the neural networks $\left\{\rho_{\theta_e} \right\}_{e \in \mathcal{E}_v}$, which approximate the solution on the edges incident to this vertex $v \in \mathcal{V}_\mathcal{K}$, to have the same value at this vertex $v$, and can therefore be considered equivalent. In the following numerical experiments, we draw attention to which of these two misfit terms is used. The first misfit term is defined for each interior vertex $v \in \mathcal{V}_\mathcal{K}$ by \begin{equation} \label{eq:misfit_continuity} \phi_{v,c}  \left( X_{v,b} \right) \coloneqq \frac{1}{n_b} \sum_{e \in \mathcal{E}_v} \sum_{i=1}^{n_b} \left(  \rho_{\theta_e}  \left( t_{v,b}^i, v \right) - \rho_{v}^i \right)^2, \end{equation} with $X_{v,b} = \{t_{v,b}^i\}_{i=1}^{n_b}$ as given before. Here, we introduce for each interior vertex $v \in \mathcal{V}_\mathcal{K}$ some additional trainable parameters $\left\{ \rho_{v}^i \right\}_{i=1}^{n_b}$, which must be taken into account when minimizing the resulting cost function in the case of use. The second misfit is defined for each interior vertex $v \in \mathcal{V}_\mathcal{K}$ by \begin{equation} \label{eq:misfit_continuity_average} \phi_{v,c}  \left( X_{v,b} \right) \coloneqq \frac{1}{n_b}  \sum_{i=1}^{n_b} \left( \sum_{e \in \mathcal{E}_v} \left( \rho_{\theta_e}  \left( t_{v,b}^i, v \right) - \frac{1}{\abs{\mathcal{E}_v}} \sum_{e \in \mathcal{E}_v} \rho_{\theta_e}  \left( t_{v,b}^i, v \right) \right) \right)^2. \end{equation} Here, on average over all time collocation points $\{t_{v,b}^i\}_{i=1}^{n_b}$, the value $\rho_{\theta_e}  \left( t_{v,b}^i, v \right)$ of each edge $e \in \mathcal{E}_v$ should be equal to the average of all edges connected to this interior vertex $v \in \mathcal{V}_\mathcal{K}$. Both misfit terms have their numerical advantages and disadvantages.
                The misfit term defined by \eqref{eq:misfit_continuity} is simpler and requires less effort to obtain the derivatives with respect to the learnable parameters in the optimization method.
                However, it comes at a cost since additional learnable parameters are needed for each inner vertex $v \in \V_k$.
                In contrast, no further learnable parameters need to be defined for the misfit term given by \eqref{eq:misfit_continuity_average}, but it is more complex for that. 
		\item \emph{Flux misfit term}: we enforce the flux vertex conditions given by \eqref{eq:Dirichlet_conditions} for each exterior vertex $v \in \mathcal{V}_\mathcal{D}$ by defining the following misfit term \begin{equation} \label{eq:misfit_Dirichlet} \begin{aligned} \phi_{v,D}  \left( X_{v,b} \right) \coloneqq & \frac{1}{n_b} \sum_{i=1}^{n_b} \bigg( \sum_{e \in \mathcal{E}_v} J_{\theta_e}\left( t_{v,b}^i, v \right) n_e  \left( v \right) + \\
        & \alpha_v \left( t_{v,b}^i \right)  \left( 1- \rho_{\theta_e}  \left( t_{v,b}^i, v \right) \right) - \beta_v \left( t_{v,b}^i \right) \rho_{\theta_e}  \left( t_{v,b}^i, v \right) \bigg)^2, \end{aligned} \end{equation} with $X_{v,b} = \{t_{v,b}^i\}_{i=1}^{n_b}$ as introduced before and $J_{\theta_e}\left( t_{v,b}^i, v \right)$ as defined in \eqref{eq:misfit_flux}.
		\item \emph{Initial misfit term}: to enforce the initial conditions for each edge $e \in \mathcal{E}$, required by \eqref{eq:initial_conditions}, we define the following misfit term \begin{equation} \label{eq:misfit_initial} \phi_{e,0}  \left( X_{e,0} \right) \coloneqq \frac{1}{n_0} \sum_{i=1}^{n_0}  \left( \rho_{\theta_e}  \left( 0,x_{e,0}^i \right) - \rho_{e,0} \left( x_{e,0}^i \right) \right)^2, \end{equation} where $X_{e,0} = \{x_{e,0}^i\}_{i=1}^{n_0} \subset \left[0, \ell_e\right]$ is a set of collocation points along $t=0$.
	\end{itemize}
	We combine all misfit terms defined above to form the cost function:
	
	\begin{equation}
		\label{eq:loss:1}
		\begin{aligned} 
			\Phi_{\theta} \left( \operatorname{X} \right) & = \sum_{v \in \mathcal{V}_\mathcal{D}} \phi_{v,D} \left( X_{v,b} \right) + \sum_{v \in \mathcal{V}_\mathcal{K}}  \left(  \phi_{v,K}  \left( X_{v,b} \right) + \phi_{v,c} \left( X_{v,b} \right)  \right) + \\
			& \quad + \sum_{e \in \mathcal{E}}  \left(  \phi_{e,r}  \left( X_{e,r} \right) + \phi_{e,0}  \left( X_{e,0} \right)  \right), 
		\end{aligned}
	\end{equation}
	where $\operatorname{X}$ represents the union of the different collocation points $X_e$, $X_{v,b}$ and $X_{e,0}$. 
	The index $\theta$ of $\Phi_{\theta}$ refers to the set of all trainable parameters, which is the union of the trainable parameters of all networks $\left\{\rho_{\theta_e} \right\}_{e \in \mathcal{E}}$, i.e. $\theta = \bigcup_{e \in \mathcal{E}} \ \theta_e$, and if \eqref{eq:misfit_continuity} is used as continuity misfit term $\phi_{v,c} \left( X_{v,b} \right)$, the trainable parameters $\left\{ \rho_{v}^i \right\}_{i=1}^{n_b}$ for each interior vertex $v \in \mathcal{V}_\mathcal{K}$ are appended to $\theta$. 

	\subsection{\edgePINN\ approach}
	\label{subsec:edgePINN}

	For this approach, we construct one cost function for each edge $e \in \mathcal{E}$ of the considered metric graph $\Gamma = \left( \mathcal{V}, \mathcal{E} \right)$, separately. In the cost function for each edge $e \in \mathcal{E}$, the corresponding residual and initial misfit terms are incorporated. Additional misfit terms enforce the vertex conditions at the two connected vertices, which depend on the networks of the adjacent edges. This single cost function is then minimized with respect to $\theta_e$. The name \edgePINN\ was chosen because the approximation problem is in a sense first decomposed and then considered on each edge individually. The idea for this approach is adopted from \cite{jagtap2020conservative}, in which so-called conservative physics-informed neural networks, abbreviated \textsc{cPINNs}, on discrete domains for non-linear conservation laws were presented. In \cite{jagtap2020conservative} the domain on which the relevant conservation law is defined is split into several adjacent subdomains, for each subdomain a cost function is defined, which consists of several misfit terms, and this cost function is used to train a neural network, which approximates the solution of the conservation law in this subdomain. These several misfit terms are the residual and initial misfit terms for the respective subdomain and further misfit terms, enforcing the interface conditions, i.e. flux continuity conditions in strong form as well as enforcing the network of the respective subdomain along the interface to the neighboring subdomain to be equal to the average given by this neural network and the neighboring neural network along this common interface. These resulting cost functions are then minimized several times in alternating succession on each subdomain. 
	The idea of dividing the domain into several subdomains and defining a cost function for each subdomain can easily be transferred to the approximation problem considered in this paper due to the structure given by the graph. In our case, the subdomains are the individual edges $e \in \mathcal{E}$ of the graph $\Gamma = \left( \mathcal{V}, \mathcal{E} \right)$, and a cost function has to be constructed for each individual edge $e \in \mathcal{E}$. As already mentioned, each of these edge-wise cost functions contains the residual misfit term, given by \eqref{eq:misfit_residual}, which enforces the approximation $\rho_{\theta_e}$ to satisfy \eqref{eq:strong_pde} and an initial misfit term, given by \eqref{eq:misfit_initial}, which enforces $\rho_{\theta_e}$ to satisfy \eqref{eq:initial_conditions}. Since each directed edge $e = (v^{\operatorname{o}}_e, v^{\operatorname{t}}_e) \in \mathcal{E}$ is uniquely defined by its origin vertex $v^{\operatorname{o}}_e \in \mathcal{V}$ and its terminal vertex $v^{\operatorname{t}}_e \in \mathcal{V}$, these two vertices specify the vertex conditions, which must be enforced via $\rho_{\theta_e}$ in its corresponding cost function. 
	We have to verify for each of the two vertices $v^{\operatorname{o}}_e, v^{\operatorname{t}}_e \in \mathcal{V}$ whether they are interior or exterior vertices, as the vertex conditions and the resulting necessary misfit terms are different. We define for each vertex $v \in \mathcal{V}$ the following term, which assigns the correct misfit term(s) as: 
	\begin{equation}
		\label{eq:vertex_assignment}
		\phi_{v}(X_{v,b}) = \begin{cases} \phi_{v,K}  \left( X_{v,b} \right) +  \phi_{v,c}  \left( X_{v,b} \right)& \text{if } v \in \mathcal{V}_{\mathcal{K}}, \\ \phi_{v,D}  \left( X_{v,b} \right) & \text{if } v \in \mathcal{V}_{\mathcal{D}}. \end{cases}
	\end{equation}
    Here it is convenient that the vertex conditions are evaluated at the same time snapshots $X_{v,b} = \left\{ t_{v,b}^i \right\}_{i=1}^{n_b} \subset \left( 0,T \right)$. We note, adapted from \cite{jagtap2020conservative}, that for this approach we use \eqref{eq:misfit_continuity_average} in our numerical experiments for the continuity misfit term $\phi_{v,c}  \left( X_{v,b} \right)$, which enforces the network $\rho_{\theta_e}$ at an interior vertex $v \in \mathcal{V}_{\mathcal{K}}$ to be equal to the average of the values generated by all networks of the edges in $\mathcal{E}_v$ evaluated at that vertex $v$. 
    By introducing \eqref{eq:vertex_assignment} we are now able to define the cost function for a single edge $e = (v^{\operatorname{o}}_e, v^{\operatorname{t}}_e) \in \mathcal{E}$ of the graph: 
	\begin{equation}
		\label{eq:cost:2}
		\phi_{\theta_e} \left( X \right) \coloneqq \phi_{e,r}  \left( X_e \right) + \phi_{e,0}  \left( X_{e,0} \right) + \phi_{v^{\operatorname{o}}_e}(X_{v,b}) + \phi_{v^{\operatorname{t}}_e}(X_{v,b}).
	\end{equation}
	Here, $\operatorname{X}$ represents the union of the different collocation points. It is easy to see that the misfit terms are just grouped in a different way compared to \eqref{eq:loss:1}. 
	For each individual edge $e \in \mathcal{E}$ of the graph $\Gamma = \left( \mathcal{V}, \mathcal{E} \right)$, a cost function $\phi_{\theta_e} \left( X \right)$ of the form \eqref{eq:cost:2} is constructed. In the learning phase, these cost functions $\left \{ \phi_{\theta_e} \left( X \right) \right\}_{e \in \mathcal{E}}$ are minimized several times in an alternating fashion, each $\phi_{\theta_e} \left( X \right)$ with respect to the trainable parameters $\theta_e$ of the respective edge $e$. 
	We note that for a connected graph (a graph where each vertex is incident to at least one edge) with more than two vertices, there are three different cases on an edge $e = (v^{\operatorname{o}}_e, v^{\operatorname{t}}_e) \in \mathcal{E}$. Either the origin and terminal vertices are both interior vertices or only one of the two is an exterior vertex. An edge with two exterior vertices would be a closed system in our view, since mass flows into the system via the origin vertex and immediately flows out of the system via the terminal vertex.
	We point out that, if the graph consists of more than one edge, the cost function $\phi_{\theta_e} \left( X \right)$ of a single edge $e \in \mathcal{E}$ always includes the trainable parameters of the neural networks that approximate the solution on the edges that are incident to the origin and terminal vertices of the relevant edge $e$ (via the misfit terms that enforce the vertex conditions, since in those we consider the edges $\mathcal{E}_v$). But by minimizing this cost function $\phi_{\theta_e} \left( X \right)$ with respect to $\theta_e$, all other trainable parameters except $\theta_e$ remain fixed and are not changed. 

    \subsection{Discrete \graphPINN\ approach}
	\label{subsec:discrete_graphPINN}

    While the approaches discussed previously aim to compute the solution on the time-space domain in one shot, this approach is based on a time-discretization of the underlying parabolic PDE.
    Here we use the fully-implicit Euler time-stepping scheme~\eqref{eq:strong_pde} introduced in \cite{raissi2019physics}. 

    To comply with our previous notation, we assume an equidistant time grid $0=t_0<t_1<\ldots<t_{n_t}=T$ with grid size $\tau = t_n - t_{n-1}$, $n=1,\ldots,n_t$. Note that we chose the same $\tau$ on each edge.
    For a fixed time $t_n$, $n=0,\ldots,n_t$, the function $\rho_e^n(x) \coloneqq \rho_e(t_n, x)$ represents the density in a spatial point $x \in [0,\ell_e]$ at time $t_n$.
    The new residual network is given by
\begin{equation}
	\label{eq:residual_network_discrete}
	r^n_{\theta_e} \left( x \right)
    \coloneqq
    \frac{\rho^n_{\theta_e} \left( x \right) -
    \rho^{n-1}_{\theta_e} \left( x \right)}{\tau}
    - \partial_x   \left(  \varepsilon \partial_x  \rho^n_{\theta_e} \left( x \right) - f \left( \rho^n_{\theta_e} \left( x \right) \right) \partial_x V \left( t_n, x \right) \right),
\end{equation}
which replaces the continuous-time residual network~\eqref{eq:residual_network}.
This corresponds to a semi-discrete approach, where the time derivative is approximated by a simple finite difference, while the remaining (spatial) derivatives are still computed by automatic differentiation. 
To compute the unknown density $\rho_{\theta_e}^n$, we include the remaining misfit terms for initial, vertex and boundary conditions as in the \graphPINN \ approach, yet evaluated at the fixed current time step $t^n$.
For this we choose a set of equidistant points $\left\{ x^i_e \right\}_{i=0}^{n_0}  \subset \left[ 0, \ell_e \right]$ on each edge $e \in \mathcal{E}$ with $0 = x^0_e < x^1_e < \ldots < x^{n_0}_e = \ell_e$.
The collocation points used at time step $t=t^n$ are $X_e = \left\{ \left( t^n, x_e^i \right) \right\}_{i=0}^{n_t} \subset \{t^n\} \times \left[0, \ell_e\right]$ for the residual misfit term defined by \eqref{eq:misfit_residual}, while the Kirchhoff-Neumann and Dirichlet vertex conditions are evaluated at the origin and terminal vertices of each edge, resp.
Note that for the computation of the solution at $t = t^1$, the values of the previous density $\rho^0_{\theta_e}(x)$ are given by the initial condition~\eqref{eq:initial_conditions}.

	\begin{figure}
		\includegraphics[width=.7\textwidth]{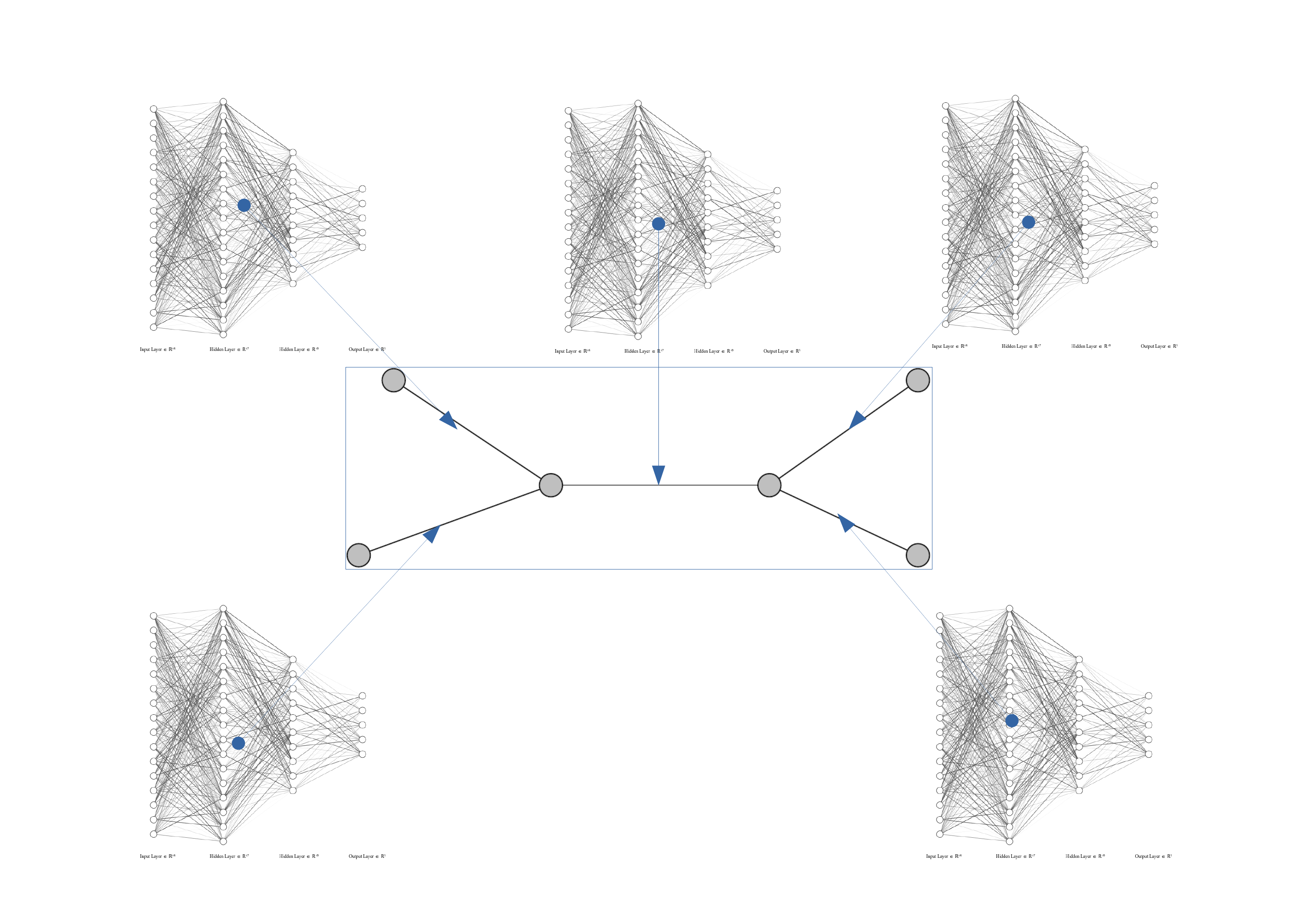}
		\caption{Simple network with an FNN associated with each of its edges.}
	   \end{figure}

	\section{A finite volume method}
	\label{sec::fvm}
We want to compare the method studied in the previous sections with
a traditional numerical approach, namely a finite volume method.
Throughout this section we assume $V_e$ to be affine linear and
write $d_e = \partial_x V_e$. The edge set incident to a vertex $v\in \V$ is
denoted by $\E_v$ and we distinguish among $\E_v^{\text{in}}\:=\{e\in \E\colon
e=(\widetilde v,v)\ \text{for some}\ \widetilde v\in \V\}$ and
$\E_v^{\text{out}} = \E_v\setminus \E_v^{\text{in}}$.
The control volumes are defined as follows. To each edge $e\in
\E$ we associate an equidistant grid of the parameter domain
\begin{equation*}
    0 = x^e_{-1/2} < x^e_{1/2} <\ldots < x^e_{n_e+1/2} = L_e
\end{equation*}
with $h_e=x^e_{k+\frac12} - x^e_{k-\frac12}$, and introduce
the intervals $I_k^e = (x_{k-1/2}, x_{k+1/2})$ for all
$k=0,\ldots,n_e$.
We introduce the following control volumes for our finite volume method,
\begin{itemize}
    \item the interior edge intervals $I_1^e,\ldots,I_{n_e-1}^e$ for
        each $e\in \E$,
    \item the vertex patches $I^v = \big(\cup_{e\in \E_v^{\text{in}}} I_{n_e}^e\big)
        \cup \big(\cup_{e\in \E_v^{\text{out}}} I_0^e\big)$ for each $v\in \V$.
\end{itemize}
A semi-discrete approximation of the problem
\eqref{eq:strong_pde}--\eqref{eq:Dirichlet_conditions}
can be expressed by the volume averages
\begin{align*}
    \rho_k^{e}(t) &= |I_k^e|^{-1}\int_{I_k^e} \rho_e(t,x)\d x,\\
    \rho^{v}(t) &= |I^v|^{-1} \Big(
    \sum_{e\in \E_v^{\text{out}}} \int_{I^e_0}\rho_e(t,x)\d x +
    \sum_{e\in \E_v^{\text{in}}} \int_{I^e_{n_e}}\rho_e(t,x)\d x
    \Big),
\end{align*}
for all $e\in \E$, $k=1,\ldots,n_e-1$, resp.\ $v\in \V$.
With the definition of the vertex patches we strongly enforce the continuity
in the graph nodes. Integration over some interval $I^e_k$,
$k=0,\ldots,n_e$, $e\in \E$, gives
\begin{align}
    \label{eq:finite_volume_integral}
    \int_{I_k^e}\,\partial_t \rho_e(t,x) \d x
    &=
    \int_{I_k^e} \partial_x (\varepsilon\,\partial_x\rho_e(t,x) -
    f(\rho_e(t,x))\,d_e(t))\d x \nonumber\\
    =h_e\,\partial_t \rho_k^e &=
    \left(
    \varepsilon\,\partial_x\rho_e(t,x) -
    f(\rho_e(t,x))\,d_e(t)
    \right)\Big\vert_{x^e_{k-1/2}}^{x^e_{k+1/2}}.
\end{align}
The diffusive fluxes are approximated by central differences
\begin{equation*}
    \partial_x\rho(t,x^e_{k+1/2}) \approx \frac1{h_e}(\rho_{k+1}^e(t)-\rho_k^e(t))
\end{equation*}
and for the convective fluxes we use, for stability reasons, the
Lax-Friedrichs numerical flux
\begin{align}\label{eq:lax_friedrichs_flux}
    f(\rho_e(t,x_{k+1/2}))\,d_e(t) &\approx F^e_{k+1/2}(t)\nonumber\\
    &:= \frac12 (f(\rho_k^e(t)) + f(\rho_{k+1}^e(t)))\,d_e(t)
    - \frac{\alpha}2 (\rho_{k+1}^e(t) - \rho_k^e(t)),
\end{align}
where we use the convention $\rho_0^e = \rho^v$ for $v\in V$
satisfying $e\in \E_v^{\text{out}}$ and $\rho_{n_e}^e =
\rho^{\widetilde v}$ with $\widetilde v\in V$ satisfying $e\in
\E_{\widetilde v}^{\text{in}}$.
The parameter $\alpha>0$ is some stabilization parameter, chosen
sufficiently large. At inflow and outflow vertices $v\in
\V_\D$ we insert the boundary condition
\eqref{eq:Dirichlet_conditions} into \eqref{eq:finite_volume_integral}
and obtain
\begin{equation*}
    \sum_{e\in \E_v} \left(\varepsilon\,\partial_x\rho_e(t,v) -
    f(\rho_e(t,v))\,d_e(t)\right)
    \approx -\alpha_v(t)\,(1-\rho^v) + \beta_v(t)\,\rho^v.
\end{equation*}
Combining the previous investigations gives the following set of
equations for each control volume $I_k^e$, $k=1,\ldots,n_e-1$, $e\in \E$, and $I^v$, $v\in \V$,
respectively.
\begin{subequations}
    \label{eq:semi_discrete_fvm}
    \begin{align}
        \intertext{For each $e\in \E$ and $k=1,\ldots,n_e-1$:}
        h_e\,\partial_t \rho_k^e(t) + \varepsilon\,\frac{-\rho_{k-1}^e(t) +
        2\rho_k^e(t) - \rho_{k+1}^e(t)}{h_e} - F_{k-\frac12}^e(t) + F_{k+\frac12}^e(t) &= 0.
        \\
        \intertext{For each $v\in \V_\K$:}
        \sum_{e\in \E_v} h_e\,\partial_t\rho^v(t)
        + \sum_{e\in \E_v^{\text{in}}}
        \left(\varepsilon\,\frac{\rho^v(t)-\rho_{n_e-1}^e(t)}{h_e} -
        F^e_{n_e-\frac12}(t)\right) &\nonumber\\
        - \sum_{e\in \E_v^{\text{out}}}
        \left(\varepsilon\,\frac{\rho_1^e(t)- \rho^v(t)}{h_e} - F^e_{\frac12}(t)\right)
        &= 0.\label{eq:fvm_kirchhoff_vertices}\\
        \intertext{For each influx node $v\in \V_\D^{\text{in}}$:}
        \sum_{e\in \E_v} h_e\,\partial_t\rho^v(t)
        - \sum_{e\in \E_v^{\text{out}}}
        \left(\varepsilon\,\frac{\rho_1^e(t)-\rho^v(t)}{h_e} -
        F^e_{\frac12}(t)\right)
        -\alpha_v\,(1-\rho^v(t))
        &= 0.
        \intertext{For each outflux node $v\in \V_\D^{\text{out}}$:}
        \sum_{e\in \E_v} h_e\,\partial_t\rho^v(t)
        + \sum_{e\in \E_v^{\text{in}}}
        \left(\varepsilon\,\frac{\rho^v(t)-\rho_{n_e-1}^e(t)}{h_e} -
        F^e_{n_e-\frac12}(t)\right)
        + \beta_v\,\rho^v(t)
        &= 0.
    \end{align}
\end{subequations}
In \eqref{eq:fvm_kirchhoff_vertices} accumulated contributions
evaluated in $v$ vanish due to the Kirchhoff-Neumann vertex conditions
\eqref{eq:Kirchhoff_Neumann_condition}.

To solve the system of ordinary differential equations
\eqref{eq:semi_discrete_fvm} for the unknowns
$\rho^e_k$ and $\rho^v$, respectively, we introduce the following
time-discretization.
For some equidistant time grid $0=t_0<t_1<\ldots<t_{n_t}=T$ with grid
size $\tau = t_n - t_{n-1}$, $n=1,\ldots,n_t$, we define the following
grid functions by
\begin{equation*}
    \rho^{v,n} = \rho^v(t_n),\quad \rho^{e,n}_k = \rho^e_k(t_n),
    \quad F_{k+1/2}^{e,n} = F_{k+1/2}^e(t_n).
\end{equation*}
We restrict the equations \eqref{eq:semi_discrete_fvm} to the grid points
and replace the time derivative by a difference quotient, evaluate
the diffusion terms in $t_{n+1}$ and the convective terms in $t_n$.
This yields for each $n=1,\ldots,n_t$
the following system of equations:
\begin{subequations}
    \label{eq:fully_discrete_fvm}
    \begin{align}
        \intertext{For each $e\in \E$ and $k=1,\ldots,n_e-1$:}
        h_e\,\rho_k^{e,n} + \varepsilon\,\tau\,\frac{-\rho_{k-1}^{e,n} +
        2\rho_k^{e,n} - \rho_{k+1}^{e,n}}{h_e}
        &= h_e\,\rho_k^{e,n-1}
        + \tau\left(F_{k-\frac12}^{e,n-1} -
	F_{k+\frac12}^{e,n-1}\right).
	\label{fully_discrete_fvm_edge_equation}
        \\
        \intertext{For each $v\in \V_\K$:}
        \abs{I_v}\,\rho^{v,n}
        + \tau\,\varepsilon\,\sum_{e\in \E_v^{\text{in}}}
        \,\frac{\rho^{v,n}-\rho_{n_e-1}^{e,n}}{h_e}
        &- \tau\,\varepsilon\,\sum_{e\in \E_v^{\text{out}}}
        \frac{\rho_1^{e,n}- \rho^{v,n}}{h_e} \nonumber\\
        = \abs{I_v}\,\rho^{v,n-1} + \tau\,\sum_{e\in \E_v^{\text{out}}}F^{e,n-1}_{\frac12}
        &- \tau\,\sum_{e\in \E_v^{\text{in}}}F^{e,n-1}_{n_e-\frac12}.\\
        \intertext{For each influx node $v\in \V_\D^{\text{in}}$:}
        \abs{I_v}\,\rho^{v,n}
        - \tau\,\varepsilon\sum_{e\in \E_v^{\text{out}}}
        \frac{\rho_1^{e,n}-\rho^{v,n}}{h_e}
        &= \abs{I_v}\,\rho^{v,n-1} +
        \tau\,F^{e,n-1}_{\frac12}\nonumber\\
        &\quad+ \tau\,\alpha_v\,(1-\rho^{v,n-1}). \\
        \intertext{For each outflux node $v\in \V_\D^{\text{out}}$:}
        \abs{I_v}\,\rho^{v,n}
        + \tau\,\varepsilon\sum_{e\in \E_v^{\text{in}}}
        \frac{\rho^{v,n}-\rho_{n_e-1}^{e,n}}{h_e}
        &= \abs{I_v}\,\rho^{v,n-1} + \tau\,\sum_{e\in
        \E_v^{\text{in}}} F^{e,n-1}_{n_e-\frac12} \nonumber\\
        &\qquad-\tau\,\beta_v\,\rho^{v,n-1}.
    \end{align}
\end{subequations}
The initial data are established by
\begin{equation*}
    \rho_k^{e,0}=\pi_{I_k^e}(\rho_0),\qquad \rho^{v,0} = \pi_{I^v}(\rho_0),
\end{equation*}
where $\pi_M$ denotes the $L^2$-projection onto the constant functions on
a subset $M\subset \Gamma$.
Note that this set of equations is linear in the unknowns in the new time
point $\rho_k^{e,n}$, $k=1,\ldots,n_e-1$, $e\in \E$ and $\rho^{v,n}$,
$v\in \V$.
The fully-discrete approximation $\rho_{\tau\,h}\colon [0,T]\times \Gamma\to
\R$ then reads
\begin{equation*}
    \widetilde \rho(t,x) = \widehat\rho^n(x)\quad \text{for}\ t\in
    [t_n,t_{n+1}),
\end{equation*}
with
\begin{equation*}
    \widehat\rho^n(x) = \rho^{v,n}\ \text{for}\ x\in I^v,\qquad \widehat\rho^n(x)
    = \rho_k^{e,n}\ \text{for}\ x\in I_k^e.
\end{equation*}

It is well-known that finite-volume schemes like \eqref{eq:fully_discrete_fvm}
guarantee a couple of very important properties. On the one hand, there is a
well established convergence theory, see e.\,g.\
\cite{MortonStynesSuli1997,LazarovMishevVassilevski1996,ThijeBoonkkampAnthonissen2010}.
On the other hand, our scheme is
mass-conserving and bound-preserving which we show in the following theorem.
Thus, the finite volume approach is suitable of designing reference solutions used to
measure the quality of the solutions obtained with the approaches from Section~\ref{sec::pinns}. 
\begin{theorem}
    The solution of \eqref{eq:fully_discrete_fvm}, $\widetilde\rho$, satisfies the following
    properties:
    \begin{enumerate}[label=\roman*)]
        \item The scheme is mass conserving, i.e., if $\alpha_v\equiv \beta_v\equiv 0$ for all $v\in \V_{\D}$,
            then there holds
            \begin{equation*}
                \int_\Gamma\widehat\rho^n\d x = \int_\Gamma\widehat\rho^0\d x\qquad\forall n=1,\ldots,n_t.
            \end{equation*}
        \item Assume that $f(x) = x(1-x)$ and in
	\eqref{eq:lax_friedrichs_flux} choose $\alpha=1$. Then, the scheme is
            bound-preserving, i.e., there holds
            \begin{equation*}
                \widetilde\rho(t,x)\in [0,1]\qquad \forall t\in [0,T],
                x\in \Gamma,
            \end{equation*}
	    provided that $\tau\le \min_{e\in \E}h_e$.
    \end{enumerate}
\end{theorem}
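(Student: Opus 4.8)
The plan is to verify the two properties separately, in both cases by summing the discrete equations in \eqref{eq:fully_discrete_fvm} over the appropriate collection of control volumes and carefully tracking how the numerical fluxes cancel.

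For part (i), mass conservation, I would sum the edge equations \eqref{fully_discrete_fvm_edge_equation} over all interior edge intervals $k=1,\ldots,n_e-1$ and all $e\in\E$, then add the vertex equations over all $v\in\V$. On the left-hand side the diffusive central-difference terms telescope along each edge, leaving only boundary contributions at $x^e_{1/2}$ and $x^e_{n_e-1/2}$; these are exactly the diffusive terms appearing in the vertex equations, so they cancel when the vertex equations are added. Similarly, on the right-hand side the convective Lax--Friedrichs fluxes $F^{e,n-1}_{k+1/2}$ telescope along each edge, and the remaining endpoint fluxes $F^{e,n-1}_{1/2}$, $F^{e,n-1}_{n_e-1/2}$ are cancelled by the flux terms in the vertex equations. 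With $\alpha_v\equiv\beta_v\equiv 0$ the boundary vertex equations contain no extra source terms, so after all cancellations one is left with $\sum_{e,k} h_e\rho_k^{e,n} + \sum_v |I_v|\rho^{v,n} = \sum_{e,k} h_e\rho_k^{e,n-1} + \sum_v |I_v|\rho^{v,n-1}$, which is precisely $\int_\Gamma \widehat\rho^n\,\d x = \int_\Gamma\widehat\rho^{n-1}\,\d x$; induction on $n$ finishes it. The one point requiring care is the bookkeeping of the factor $h_e$ versus $|I_v|$ and the convention $\rho_0^e=\rho^v$, $\rho_{n_e}^e=\rho^{\widetilde v}$ in the definition of the endpoint fluxes, so that the vertex-patch contributions are counted with the right weights.

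For part (ii), bound-preservation, I would argue by induction on the time level $n$, assuming $\rho^{e,n-1}_k,\rho^{v,n-1}\in[0,1]$ for all indices. Because the scheme is implicit in the diffusion and explicit in the convection, each new equation has the schematic form $(\,\text{diagonally dominant } M\text{-matrix})\,\rho^{\,\cdot\,,n} = (\text{explicit right-hand side in terms of }\rho^{\,\cdot\,,n-1})$. The key is therefore to show (a) the system matrix is a monotone $M$-matrix, so its inverse has nonnegative entries, and (b) the right-hand side vector lies in $[0,M]$ for a suitable bound, from which $\rho^{\,\cdot\,,n}\in[0,1]$ follows by the discrete maximum principle. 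For the lower bound $\rho\ge 0$: the right-hand side of each edge equation is $h_e\rho_k^{e,n-1} + \tau(F^{e,n-1}_{k-1/2}-F^{e,n-1}_{k+1/2})$; writing out the Lax--Friedrichs flux with $\alpha=1$ and $f(\rho)=\rho(1-\rho)$ one checks that $h_e\rho_k^{e,n-1}+\tau(F^{e,n-1}_{k-1/2}-F^{e,n-1}_{k+1/2})$ is a convex combination (with nonnegative coefficients, using $|f'|\le 1$ on $[0,1]$ and the CFL restriction $\tau\le h_e$) of $\rho_{k-1}^{e,n-1},\rho_k^{e,n-1},\rho_{k+1}^{e,n-1}$ plus, at worst, a nonnegative source $\tau\alpha_v(1-\rho^{v,n-1})\ge 0$ at influx nodes; hence it is $\ge 0$, and since $M^{-1}\ge 0$ we get $\rho^{\,\cdot\,,n}\ge 0$. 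For the upper bound $\rho\le 1$, I would substitute $\sigma := 1-\rho$ and observe that $\sigma$ satisfies a system of exactly the same structure (using $f(1-\sigma)=\sigma(1-\sigma)$, i.e.\ $f$ is symmetric about $1/2$, and that the roles of $\alpha_v$ and $\beta_v$, of influx and outflux nodes, are interchanged), so the same convex-combination argument gives $\sigma^{\,\cdot\,,n}\ge 0$, i.e.\ $\rho^{\,\cdot\,,n}\le 1$.

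I expect the main obstacle to be the convex-combination estimate at the \emph{vertex patches} rather than in the edge interior: there the ``diagonal'' coefficient is $|I_v| = \sum_{e\in\E_v}h_e$ (a sum over possibly many incident edges of differing length), the explicit flux contributions come from several neighbouring intervals $I^e_1$ or $I^e_{n_e-1}$, and one must show that the CFL condition $\tau\le\min_{e}h_e$ is still enough to keep all coefficients nonnegative when these contributions are combined; getting the Lax--Friedrichs dissipation term to dominate the (nonlinear, one-sided) convective term simultaneously on every incident edge is the delicate part. Once the $M$-matrix property and the vertex-patch convexity estimate are in hand, assembling the global statement and the induction is routine. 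Throughout I would also note that the linear systems are uniquely solvable precisely because the matrices are strictly diagonally dominant $M$-matrices, so $\widetilde\rho$ is well defined.
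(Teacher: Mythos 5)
Your proposal is correct and follows essentially the same route as the paper: part (i) by summing all equations and letting the diffusive and convective fluxes telescope/cancel, and part (ii) by induction using the M-matrix (strict diagonal dominance) property of $M+\tau\,\varepsilon A$, nonnegativity of the explicit right-hand side via the Lax--Friedrichs coefficients under $\alpha=1$ and $\tau\le\min_e h_e$, and the substitution $1-\rho$ for the upper bound. Your observation that the vertex-patch equations need the same coefficient check is a fair point of extra care --- the paper only carries out the computation for the interior edge equation --- but it does not change the argument's structure.
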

\begin{proof}
    \begin{enumerate}[label=\roman*)]
        \item This directly follows after summing up all the equations in
            \eqref{eq:fully_discrete_fvm}. Note that the diffusive and convective
            fluxes cancel out.
        \item The system \eqref{eq:fully_discrete_fvm} can be written as a
            system of linear equations of the form
            \begin{equation}
                \label{eq:fvm_equation_system}
                (M+\tau\,\varepsilon A)\vec \rho^n = M\vec \rho^{n-1} + \vec F(\vec\rho^{n-1}),
            \end{equation}
            where $M$ is the mass matrix and $A$ contains the coefficients of the
            diffusion terms on the left-hand side of
	    \eqref{eq:fully_discrete_fvm}.
	    The vector $\vec \rho^n$ contains the unknowns
            $\rho^{v,n}$ and $\rho_k^{e,n}$. In the usual ordering the
	    unknowns and equations the matrix $M+\tau\,\varepsilon\,A$
	    is strictly diagonal dominant and is thus an M-matrix.
            The inverse possesses non-negative entries only.
	    The right-hand side of \eqref{eq:fvm_equation_system} is
	    also non-negative under the assumption $\vec\rho^{n-1}\in
	    [0,1]$. We demonstrate this for the equation
	    \eqref{fully_discrete_fvm_edge_equation}. Insertion
	    of \eqref{eq:lax_friedrichs_flux} and reordering the terms
	    yields
	    \begin{align*}
		&h_e\,\rho_k^{e,n-1} + \tau\left(F_{k-1/2}^{e,n-1} -
		F_{k+1/2}^{e,n-1}\right) = (h_e-\alpha\,\tau)\,\rho_k^{e,n-1}\\
		&\quad 
		+\frac\tau2\left((1-\rho_{k-1}^{e,n-1}) + \alpha\right)
		\rho_{k-1}^{e,n-1}
		+
		\frac\tau2\left(-(1-\rho_{k+1}^{e,n-1}) + \alpha\right)
		\rho_{k+1}^{e,n-1} \ge 0.
	    \end{align*}
	    The non-negativity follows from $\rho_{k}^{e,n-1}\in
	    [0,1]$ for $k=0,\ldots,n_e$ and $\alpha=1$ as well as
	    $\tau\le \min_{e\in \E}h_e$.
	    This, together with the M-matrix property of
            $M+\tau\,\varepsilon\,A$, implies $\vec\rho^n\ge 0$.

	    Due to
            $f(x)=x(1-x)$ we may rewrite
            \eqref{eq:fvm_equation_system} in the form
            \begin{equation*}
                (M+\tau\,\varepsilon A)(\vec1-\vec \rho^n)
                = M(\vec1-\vec\rho^{n-1}) + \vec G(\vec 1-\vec\rho^{n-1}),
            \end{equation*}
            with some vector-valued function $\vec G$. With similar
	    arguments like before we conclude that the right-hand side
	    is non-negative and	    
            thus, $1-\vec\rho^n \ge 0$, which proves the upper bound.
	    By induction the result follows for all $n=1,\ldots,n_t$.
    \end{enumerate}
\end{proof}

\section{Numerical results}
\label{sec::results}
	
We want to compare the various \pinn\ approaches presented earlier on a representative model graph. We here focus on the accuracy of the approximation in comparison to the finite volume method presented in \eqref{sec::fvm} and have not yet tailored the approaches for computational speed. The accuracy is validated by numerical experiments in which we measure the deviation of the values generated by the different \pinn\ approaches using different neural networks from the values generated by the \fvm. These numerical experiments are implemented with \lstinline!Python 3.8.8!, rely on the \textsc{TensorFlow} implementation and were run on a Lenovo ThinkPad L490, 64 bit Windows system, 1.8 Ghz Intel Core i7-8565U and 32 GB RAM. 

	\subsection{Two different ways to utilize neural networks }
	Before we proceed to the numerical experiments, we specify the neural networks and their topology that were used for the approximations $\left \{ \rho_{\theta_e} \right \}_{e \in \mathcal{E}}$. Of course, we have an infinite freedom of choice for choosing neural networks in our \pinn\ approaches, the only constraints being the dimension of the input $\left(t, x\right) \in \mathbb{R}^2$, the dimension of the output $\rho_{\theta_e}\left(t, x\right) \in \mathbb{R}$ on each edge $e \in \mathcal{E}$ and the differentiability of the corresponding neural network up to order two (due to $\partial_{xx} \rho_{\theta_e}$ appearing in \eqref{eq:strong_pde}).

	Since we have to find an approximation $\rho_{\theta_e}$ for each edge $e \in \mathcal{E}$, it is quite evident to use one single neural network for the approximation of $\rho_e$ on one individual edge $e$. This intuitive choice leads to $\theta_e$ describing the trainable parameters of a single neural network, i.e. of $\rho_{\theta_e}$, as also previously assumed, and results in the fact that we have to train as many neural networks as we have edges of the graph. The use of a single neural network for approximating the solution $\rho_e$ on a single edge $e$ allows both the type of neural network used and the hyperparameters of a single network such as activation function, depth or width of the network to be freely chosen, depending on intuitive knowledge about the regularity of the solution on each edge. In the case of smooth edges, a shallow network can be used, while a deep neural network can be used on an edge where a complex solution is expected. 
	The easiest choice is to use a feed-forward neural network with $L$-layers for each edge $e \in \mathcal{E}$, which we denote by $\operatorname{fnn}_{\theta_e}$ and is defined by 
	\begin{equation} 
		\label{one_for_each}
		\begin{gathered}
			\operatorname{fnn}_{\theta_e} \colon \mathbb{R}^2 \to \mathbb{R}, \\
			\\
			\operatorname{fnn}_{\theta_e}\left(t, x\right) = \sigma_L\left(W^L_e \sigma_{L-1}\left(W^{L-1}_e\sigma_{L-2}\left(\cdots \sigma_{1}\left(W^{1}_e x^0 +b^1_e\right) \cdots\right) + b^{L-1}_e\right) + b^{L}_e\right),
		\end{gathered} 
	\end{equation} 
	where $x^0 = \left(t, x\right)^{\mathrm{T}} \in \left(0, T\right) \times \left[0, \ell_e\right] \subset \mathbb{R}^2$, $\sigma_l \colon \mathbb{R}^{n_l} \to \mathbb{R}^{n_l}$ is a non-linear activation function and $\theta_e = \left\{ \left\{ W^l_e \right\}_{l = 1, \ldots, L}, \left\{ b^l_e \right\}_{l = 1, \ldots, L} \right\}$ with $W^l_e \in \mathbb{R}^{n_l \times n_{l-1}}$ and $b^l_e \in \mathbb{R}^{n_l}$, where $n_0 = 2$ and $n_L = 1$. All activation functions $\left \{ \sigma_l \right \}_{l = 1, \ldots, L}$ must be differentiable up to order two so that the resulting network  is also differentiable up to order two with respect to the input $t \in \left(0, T \right)$ and $x \in  \left[0, \ell_e\right]$. Feed-forward neural networks have already been used in various \pinn \ setups, so this choice can be considered as reasonable. 

	Another possibility is to use just one single \textsc{FNN} with $L$ layers for the approximations on all edges of the graph at the same time. This means that this single \textsc{FNN} returns for an input $\left(t,x\right) \in \mathbb{R}^2$ the values $\left\{ \rho_{\theta_{e_i}}\left(t, x\right) \right\}_{i = 1, \ldots, E}$ for all edges $\mathcal{E} = \left\{ e_i \right\}_{i = 1, \ldots, E}$, where $E = \abs{\mathcal{E}}$. This is achieved by choosing the number of neurons in the output/last layer equal to the number of edges of the graph, i.e. $n_L = E$. We define this neural network as follows
	\begin{equation} 
		\label{one_for_all}
		\begin{gathered}
			\operatorname{FNN}_{\hat{\theta}} \colon \mathbb{R}^2 \to \mathbb{R}^E \\
			\\
			\operatorname{FNN}_{\hat{\theta}}\left(t, x\right) = \sigma_L\left(W^L \sigma_{L-1}\left(W^{L-1}\sigma_{L-2}\left(\cdots \sigma_{1}\left(W^{1}x^0 +b^1\right) \cdots\right) + b^{L-1}\right) + b^{L}\right),
		\end{gathered} 
	\end{equation}
	where $x^0 = \left(t, x\right)^{\mathrm{T}} \in \left(0, T\right) \times \left[0, \ell_e\right] \subset \mathbb{R}^2$ and $\hat{\theta} = \left\{ \left\{ W^l \right\}_{l = 1, \ldots, L}, \left\{ b^l \right\}_{l = 1, \ldots, L} \right\}$ with $W^l \in \mathbb{R}^{n_l \times n_{l-1}}$ and $b^l \in \mathbb{R}^{n_l}$, where $n_0 = 2$ and $n_L = E$. The individual approximations are given by 
	\begin{equation*}
		\rho_{\theta_{e_i}}\left(t, x\right) = \left[ \operatorname{FNN}_{\hat{\theta}}\left(t, x\right) \right]_i \in \mathbb{R},
	\end{equation*}
	i.e. the $i$-th entry of the networks output $\operatorname{FNN}_{\hat{\theta}}\left(t, x\right) \in \mathbb{R}^E$ is the approximation of the solution on the $i$-th edge. 
	In this work we use this neural network with multidimensional output in our numerical experiments only for the continuous \graphPINN\ approach, because it turned out to be difficult to assign the trainable parameters to the individual edges and to use automatic differentiation for the derivatives on the individual edges in a practical way. This means the cost function given by \eqref{eq:loss:1} is minimized with respect to the trainable parameters $\hat{\theta}$ of this single network $\operatorname{FNN}_{\hat{\theta}}$ and if \eqref{eq:misfit_continuity} is to be used for the continuity misfit term, additionally the trainable parameters $\left\{ \rho_{v}^i \right\}_{i=1}^{n_b}$.

    The use of a single neural network is motivated by the hope that the neurons in the hidden layers learn the structure of the entire graph and the resulting communication of the edges via the vertices, i.e. that all interactions within the graph are taken into account by the neurons after the learning phase. Furthermore, we hope that the computational cost is reduced, since only the weights and biases of this single network need to be trained, provided the depth and width of the neural network $\operatorname{FNN}_{\hat{\theta}}$ are not too large. The fact that $\operatorname{FNN}_{\hat{\theta}}\left(t, x\right)$ generates the output for all edges at the same time is also an advantage, as in an implementation only the execution of one network is necessary instead of several. However, we point out that this approach is currently implemented for the case if all edges of the graph have the same length, because then the same collocation points $X_e = \left\{ \left( t_e^i, x_e^i \right) \right\}_{i=1}^{n_e}$ and $X_{e,0} = \left\{x_{e,0}^i \right\}_{i=1}^{n_0}$ can be used for the misfit terms given by \eqref{eq:misfit_residual} and \eqref{eq:misfit_initial} for all edges $e \in \mathcal{E}$. Adding a scaling layer should also enable the use of different edge lengths.

	The hyperbolic tangent is used component-wise as activation function for each hidden layer, as well as for the output layer, of all networks used in the numerical experiments, i.e.
	\begin{equation}
		\begin{gathered}
			\label{eq:tanh}
			\sigma_l \colon \mathbb{R}^{n_l} \ni x \mapsto \sigma_l (x) \coloneqq \left(
				\begin{array}
					{c} \sigma_l \left( x_{1} \right) \\
					\vdots \\
					\sigma_l \left( x_{n_l} \right)
				\end{array}
				\right) \in \mathbb{R}^{n_l},  \\
				\\
			\sigma(x_i) =\tanh (x_i)=\frac{\exp (x_i)-\exp (-x_i)}{\exp (x_i)+\exp (-x_i)} \in \mathbb{R}.
		\end{gathered}
	\end{equation}
	This activation function is twice continuously differentiable, which means that the resulting neuronal networks are twice continuously differentiable.
    {We also conducted experiments with the sigmoid activation function, which seems to be reasonable in particular in the output layer due to property that the solution satisfies $0 \le \rho_e \le 1$ a.e.\ on each edge $e \in \E$. The obtained approximation errors were most often slightly worse than the ones obtained for the hyperbolic tangent activation function.}

	\subsection{Numerical experiments}

	Next, we present the results of our numerical experiments in which we measured the deviation of the values generated by the neural networks trained by each \pinn\ approach from the values generated by the \fvm\ at the same grid points. In order to compare the results of all numerical experiments, we use the same problem setup for each experiment, i.e. the set of drift-diffusion equations is considered on the same model metric graph $\Gamma$ under the same initial and vertex conditions in each experiment. The structure of this model metric graph is illustrated in \eqref{fig7}. 

	\begin{figure}
		\begin{center}
			\begin{tikzpicture}
				\node[shape=circle,draw=black] (v1) at (-2.4,1.4) {$v_1$};
				\node[shape=circle,draw=black] (v5) at (2.4,1.4) {$v_5$};
				\node[shape=circle,draw=black] (v3) at (-1,0) {$v_3$};
				\node[shape=circle,draw=black] (v4) at (1,0) {$v_4$};
				\node[shape=circle,draw=black] (v2) at (-2.4,-1.4) {$v_2$};
				\node[shape=circle,draw=black] (v6) at (2.4,-1.4) {$v_6$};
				
				\path [->](v1) edge node[above] {$e_1$} (v3);
				\path [->](v2) edge node[below] {$e_2$} (v3);
				\path [->](v3) edge node[above] {$e_3$} (v4);
				\path [->](v4) edge node[above] {$e_4$} (v5);
				\path [->](v4) edge node[below] {$e_5$} (v6);
			\end{tikzpicture}
		\end{center}
		\caption{The model metric graph $\Gamma$ used in the numerical experiments.}
		\label{fig7}
	\end{figure}

	The model graph $\Gamma$ consists of $6$ vertices $\mathcal{V} = \left\{ v_i \right\}_{i = 1,\ldots, 6}$ and $5$ edges $\mathcal{E} = \left\{ e_i \right\}_{i = 1,\ldots, 5}$ and is of course directed. The vertices $v_1$, $v_2$, $v_5$ and $v_6$ are the exterior vertices, i.e. $\mathcal{V}_{\mathcal{D}} = \left\{v_1, v_2, v_5, v_6 \right\}$, and the vertices $v_3$ and $v_4$ are the interior vertices, i.e. $\mathcal{V}_{\mathcal{K}} = \left\{v_3, v_4 \right \}$, of the graph $\Gamma$. 

	As an approximation problem, we consider the set of drift-diffusion equations defined by \eqref{eq:strong_pde} on all edges $e \in \mathcal{E}$ of the graph $\Gamma = \left(\mathcal{V}, \mathcal{E} \right)$, whose structure is given by \eqref{fig7}, under the initial and vertex conditions given by \eqref{eq:Kirchhoff_Neumann_condition}, \eqref{eq:continuity_condition}, \eqref{eq:Dirichlet_conditions} and \eqref{eq:initial_conditions}. Furthermore we have the following assumptions:

	\begin{enumerate}
		\item The observation time ends at $T = 10$ and all edges of the graph $\Gamma$ are of equal length with $\ell = 1$.
		\item Let $\varepsilon = 0.01$ in \eqref{eq:strong_pde}.
		\item The mobility in \eqref{eq:strong_pde} is given by $f \left( \rho_e(t,x) \right) = \rho_e \left(t,x\right)\left(1-\rho_e \left(t,x\right)\right)$.
		\item $\partial_x V_e \left(t,x\right) = 1$ for all $\left(t,x\right) \in \left(0, 10\right) \times \left[0,1\right]$ holds in \eqref{eq:strong_pde} for the potential $V_e$ of each edge $e \in \mathcal{E}$. 
		\item On the exterior vertices $\mathcal{V}_{\mathcal{D}}$, we have for the flux vertex conditions, \eqref{eq:Dirichlet_conditions}, constant influx and outflux rates which are specified for $v_1$ by $\alpha_{v_1}\left(t\right) = 0.9$ and $\beta_{v_1}\left(t\right) = 0$, for $v_2$ by $\alpha_{v_2}\left(t\right) = 0.3$ and $\beta_{v_2}\left(t\right) = 0$, for $v_5$ by $\alpha_{v_5}\left(t\right) = 0$ and $\beta_{v_5}\left(t\right) = 0.8$ and for $v_6$ by $\alpha_{v_6}\left(t\right) = 0$ and $\beta_{v_6}\left(t\right) = 0.1$.
		\item The initial conditions are $\rho_e\left(0,x\right) = 0$ for all $x \in \left[0, \ell_e\right]$ for each edge $e \in \mathcal{E}$. 
	\end{enumerate}

	As already mentioned, to compare the accuracy we use as reference a \fvm\ solution as described in \eqref{sec::fvm}, which was obtained on an equidistantly spaced time and space grid with $n_e=\num{8000}$ for all $e \in \E$ and $n_t=\num{4000}$.
    The chosen stabilization parameter is $\alpha = 1$.

    We measure the approximation error along one edge $e \in \E$ in the $L^2$-norm defined by
	\begin{equation*}
		\lVert \varphi_h \rVert^2_{L^2(e)} \coloneqq \sum^{n_x}_{i = 0} \ \sum^{n_t}_{j = 0} \ \Delta_x \ \Delta_t \ \varphi_h(t_j, x_i)^2 
	\end{equation*}
    for a grid function $\varphi_h$, while the $L^2$-norm on a graph $\Gamma$ is given by 
	\begin{equation}
		\label{eq:L2_norm}
		\lVert \varphi_h \rVert^2_{L^2( \Gamma )} \coloneqq \sum_{e \in \mathcal{E}} \ \lVert \varphi_h \rVert^2_{L^2(e)}.
	\end{equation}

	We have performed five numerical experiments of which the resulting $L^2$-error, computed using \eqref{eq:L2_norm}, of the values generated by the respective \pinn\ approach using the respective neural network with respect to the values generated by the \fvm. 

    While a hyperparameter optimisation for our implementations would have been possible,
    we decided to test the implementations on a set of different numbers of hidden layers and different numbers of neurons in the hidden layers for better comparability.
    In all tables, the columns specify the number of hidden layers and the rows the number of neurons per hidden layer, from which the topology of the used neural network can be derived.
	The results of the first two numerical experiments are shown in Table~\ref{tab:SpaceTime_each_edge} and Table~\ref{tab:one_net}, where the relative $L^2$-errors of the values generated with the (continuous) \graphPINN\ approach, as described in Section~\ref{subsec:graphPINN}, are listed. For the results in Table~\ref{tab:SpaceTime_each_edge}, one feed-forward neural network was used for each edge of the graph, as described by \eqref{one_for_each}, and for the results in Table~\ref{tab:one_net} one single feed-forward neural network was used for all edges of the graph, as described by \eqref{one_for_all}.
    As mentioned before, the hyperbolic tangent given by \eqref{eq:tanh} was used as activation function for each layer of all used neural networks.
	For the minimization of the cost function $\Phi_{\theta}$ in these two numerical experiments we use a combination of \adam\ and \lbfgs\ optimizers as usual for the \pinn\ setup.
    First a variant of the gradient descent method is used, in which the direction is given by the \adam\ optimizer from the module \lstinline!Keras!~\cite{Chollet:2015}, which belongs to the package \lstinline!Tensorflow!~\cite{tensorflow2015-whitepaper}. We changed the input parameter \lstinline!learning_rate!, which obviously specifies the learning rate of the method, to \lstinline!0.01! and left all other input parameters of the \adam\ optimizer by default. In both numerical experiments this method stops after $1000$ iterations. After that, we use the \lbfgs\ optimizer from the package \lstinline!SciPy!, see \cite{SciPy:2020}, and we set as parameters in the options \lstinline!maxfun = 500000! which specifies the maximum number of function evaluations, \lstinline!maxcor = 50! which specifies the maximum number of stored variable metric corrections used in the L-BFGS approximation of the Hessian of the cost function $\Phi_{\theta}$, \lstinline!maxls = 50! which specifies the maximum number of line search steps per iteration and \lstinline!ftol = 1.0*np.finfo(float).eps! which specifies a parameter in a stopping criterion that aborts the method if the values of the cost function $\Phi_{\theta}$ between two iterates are too small. 
	For the first numerical experiment, which produced the values in Table~\ref{tab:SpaceTime_each_edge}, we set \lstinline!maxiter = 20000! which specifies the maximum number of iterations, and for the second numerical experiment, which produced the values in Table~\ref{tab:one_net}, we set \lstinline!maxiter = 50000!.

	For the third experiment, we evaluated the relative $L^2$-error of the values generated with the discrete \graphPINN\ approach, as described in Section~\ref{subsec:discrete_graphPINN}, where one feed-forward neural network was used for each edge of the graph, as described by \eqref{one_for_each}.
	Our numerical results confirm that the approach is similar to the first continuous space-time approach, since it corresponds to a rearrangement of the terms in the cost function and an alternating optimization procedure.
	In the case of each neural network consisting of $3$ hidden layers and $20$ neurons per hidden layer, the relative $L^2$ error obtained after $\num{20000}$ \adam\ iterations with a step-wise decreasing learning rate (no \lbfgs\ iterations) is $0.623$.

	For these first $3$ numerical experiments the collocation points were randomly generated using \lstinline!tf.random.uniform!, which is provided by the package \lstinline!Tensorflow!, see \cite{tensorflow2015-whitepaper}, and uses a uniform distribution.
    As time-space collocation points $X_e$ for \eqref{eq:misfit_residual} we used $n_e = 4000$ randomly selected points $\left\{ \left( t_e^i, x_e^i \right) \right\}_{i=1}^{n_e} \subset \left( 0, T \right) \times \left[0, \ell_e\right]$. As space collocation points $X_{e,0}$ for \eqref{eq:misfit_initial} we used $n_0 = 1000$ randomly selected points $ \left\{ x_{e,0}^i \right\}_{i=1}^{n_0} \subset \left[0, \ell_e\right]$. For \eqref{eq:misfit_Kirchhoff}, \eqref{eq:misfit_Dirichlet} and \eqref{eq:misfit_continuity} (or \eqref{eq:misfit_continuity_average}) we use $n_b = 1000$ randomly chosen points $\left\{ t_{0,b}^i \right\}_{i=1}^{n_b} \subset \left( 0,T \right)$ as time snapshots $X_{0,b}$ for origin vertices, i.e. $v = 0$, and $n_b = 1000$ randomly chosen points $\left\{ t_{1,b}^i \right\}_{i=1}^{n_b} \subset \left( 0,T \right)$ as time snapshots $X_{1,b}$ for terminal vertices, i.e. $v = 1$ (we recall that all edges of the model graph $\Gamma$ have length $1$). 

	The results of the fourth and fifth numerical experiments are shown in Table~\ref{tab:time_stepping_1} and Table~\ref{tab:time_stepping_2}, where the relative $L^2$-errors of the values generated with the discrete \graphPINN \ approach, as described in Section~\ref{subsec:discrete_graphPINN}, are listed. 
	For the results of these two numerical experiments one feed-forward neural network was used for each edge of the graph, as described by \eqref{one_for_each}. For the fourth numerical experiment, which produced the values in Table~\ref{tab:time_stepping_1}, we chose a discretization of $n_t = 200$ and for the fifth numerical experiment, which produced the values in Table~\ref{tab:time_stepping_1}, we chose a discretization of $n_t = 400$. Here too, as in the continuous \graphPINN \ approach, we used the same combination of \adam\ and \lbfgs \ optimizers.
    While the computation of the first iterate $\rho^1_{\theta_e}$ allows for up to $\num{2000}$ iterates with a learning rate of $0.01$ followed by up to $\num{10000}$ \lbfgs\ iterations, the remaining iterates only allow for up to $\num{100}$ \adam\ steps with a learning rate of $0.001$ followed by up to $\num{10000}$ \lbfgs\ iterations.
    Here, the network of the previous time-layer is used to warm-start the optimizer.
    
    Among all discussed methods, the discrete \graphPINN\ approach is the only one that achieves an accuracy comparable to the one obtained by the reference \fvm\ implementation.
    In addition, it is the only method for which the \lbfgs\ optimizer terminates due to a small error, and not by exhausting the maximum iteration numbers. Note that the results presented in Table~\ref{tab:one_net} produce relative errors larger than one in many cases and as such render the approach in its current form not viable.
    Although a direct computation of the solution on the time-space domain seems preferable, an exploitation of the parabolic structure of the PDE, as was done in the discrete time approach by the fully implicit Euler scheme, seems to improve both the accuracy and runtime.
    Finally, we mention that all of the discussed \pinn-based approaches are much slower than the reference \fvm\ algorithm.
    Here, the solution of multiple linear equations necessary to compute the solution in classical approaches like finite element methods or finite volume methods is much faster the the solution of one or more highly nonlinear optimization problems.
    Note however, that the \fvm\-method is purpose-built for this specific type of equations, while the \pinn\-based approaches might be used (with other hyperparameters, architectures, etc.) for other PDEs as well, including inverse problems.




\begin{figure}
    \begin{center}
        \begin{subfigure}[b]{0.3\textwidth}
            \begin{center}
                \includegraphics[scale=0.14]{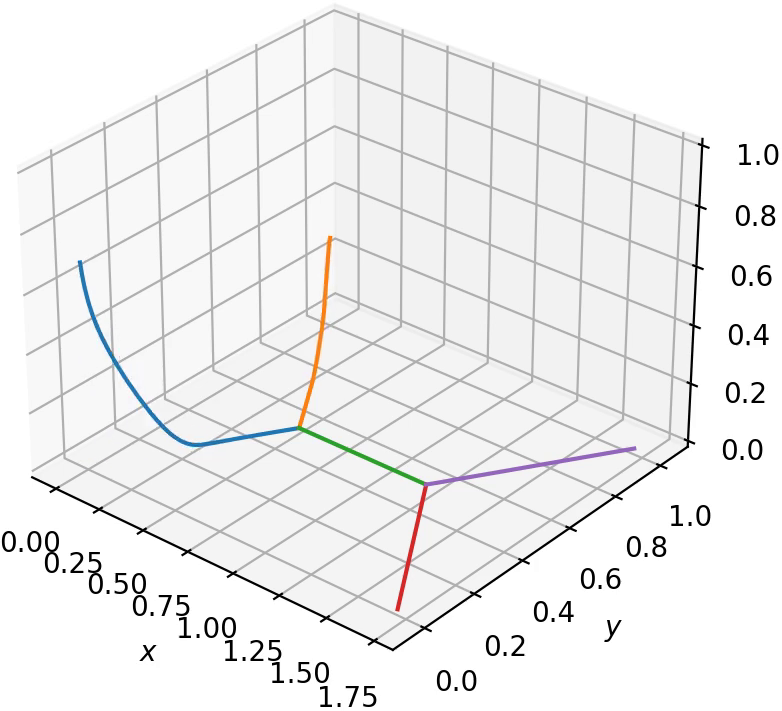}
            \end{center}
            \caption{$t=0.5$}
        \end{subfigure}
        \begin{subfigure}[b]{0.3\textwidth}
            \begin{center}
                \includegraphics[scale=0.14]{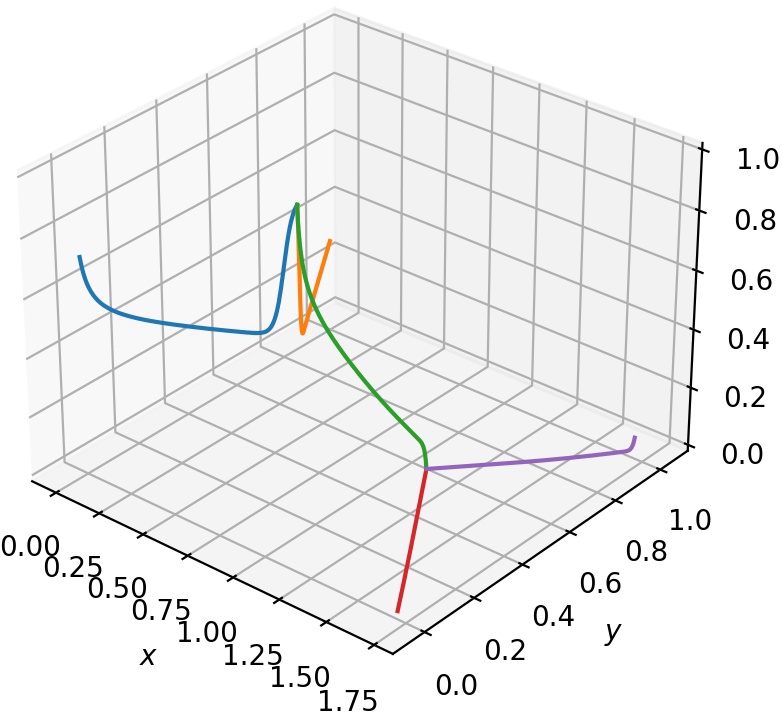}
            \end{center}
            \caption{$t=3.0$}
        \end{subfigure}
        \begin{subfigure}[b]{0.3\textwidth}
            \begin{center}
                \includegraphics[scale=0.14]{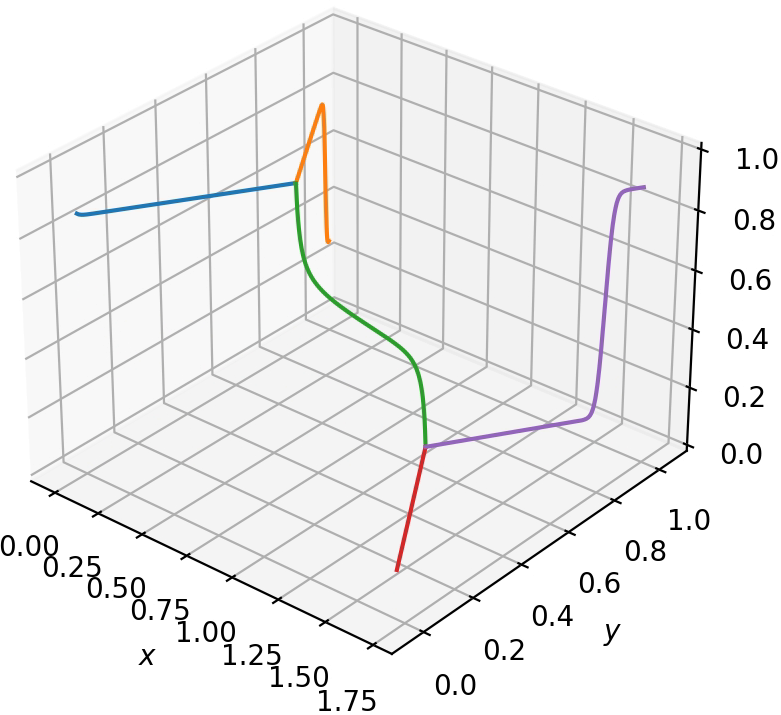}
            \end{center}
            \caption{$t=9.0$}
        \end{subfigure}
    \end{center}
    \caption{\pinn\ solution obtained with the space-time appoach after \num{1000} \adam\ steps and \num{50000} \lbfgs\ steps. Number of hidden layers is $3$, number of neurons per layer is $20$, activation function is $\tanh$ on hidden and output layers.
    A video of the \pinn\ solution can be found at \protect\url{https://vimeo.com/705483399}, the FVM reference solution can be found at \protect\url{https://vimeo.com/705483518} and the difference between both at \protect\url{https://vimeo.com/705483580}.}
    \label{fig:solution}
\end{figure}

	\begin{table}
			\begin{tabular}{llllll}
                \toprule
                neurons \textbackslash \ layers & 1  & 2 & 3 & 4 &  \\ \midrule
                10 & \num{0.51} & \num{0.24}   & \num{0.26}  & \num{0.33}  \\
                20 & \num{0.45} & \num{0.24}  & \num{0.15}  &  \num{0.42} \\
                30 & \num{0.41} & \num{0.24} &  \num{0.36} &  \num{0.32} \\
                40 & \num{0.38} & \num{0.12}  & \num{0.12}  & \num{0.20}  \\ \bottomrule
			\end{tabular}
		\caption{Relative $L^2$-error, space time approach, one NN per edge, 1000 \adam\ steps followed by up to 50000 \lbfgs\ steps.
        $N_0 = N_b = 1000$, $N_r = 4000$.}
		\label{tab:SpaceTime_each_edge}
	\end{table}

	\begin{table}
			\begin{tabular}{llllll}
                \toprule
                neurons \textbackslash \ layers & 1  & 2 & 3 & 4 &  \\ \midrule
                10 & \num{1.21} & \num{1.08}   & \num{1.13}  & \num{0.27}  \\
                20 & \num{1.18} & \num{1.14}  & \num{1.09}  &  \num{0.27} \\
                30 & \num{1.12} & \num{1.09}  & \num{1.13}  &  \num{1.13} \\
                40 & \num{1.11} & \num{1.15 }& \num{1.16} & \num{1.10} \\ \bottomrule
			\end{tabular}
		\caption{Relative $L^2$-error, space-time, One-Net approach, 1000 \adam\ steps followed by up to 50000 \lbfgs\ steps.
        $N_0 = N_b = 1000$, $N_r=4000$.}
        \label{tab:one_net}
	\end{table}

	\begin{table}
			\begin{tabular}{llllll}
                \toprule
                neurons \textbackslash \ layers & 1  & 2 & 3 & 4 &  \\ \midrule
                10 & \num{0.205} & \num{0.026}   & \num{0.026}  & \num{0.382}  \\
                20 & \num{0.080} & \num{0.026}  & \num{0.026}  &  \num{0.026} \\
                30 & \num{0.030} & \num{0.026}  & \num{0.026}  &  \num{0.026} \\
                40 & \num{0.030} & \num{0.026}  & \num{0.026}  &  \num{0.026} \\ \bottomrule
			\end{tabular}
		\caption{Relative $L^2$-error, discrete time stepping approach, one NN per edge, 1000 \adam\ steps followed by up to 10000 \lbfgs\ steps at each discrete time step.
        $N_0 = N_b = 200$ chosen equidistantly.}
		\label{tab:time_stepping_1}
	\end{table}

	\begin{table}
			\begin{tabular}{llllll}
                \toprule
                neurons \textbackslash \ layers & 1  & 2 & 3 & 4 &  \\ \midrule
                10 & \num{0.284} & \num{0.016}   & \num{0.015}  & \num{0.015}  \\
                20 & \num{0.113} & \num{0.015}  & \num{0.015}  &  \num{0.014} \\
                30 & \num{0.021} & \num{0.016}  & \num{0.015}  &  \num{0.015} \\
                40 & \num{0.033} & \num{0.015}  & \num{0.015}  &  \num{0.015} \\ \bottomrule
			\end{tabular}
		\caption{Relative $L^2$-error, discrete time stepping approach, one NN per edge, 1000 \adam\ steps followed by up to 10000 \lbfgs\ steps at each discrete time step.
        $N_0 = N_b = 400$ chosen equidistantly.}
		\label{tab:time_stepping_2}
	\end{table}


\section{Conclusion and outlook}
In this paper we compared several different \pinn\ frameworks to solve a differential equation on a metric graph domain. Such system arise in many important applications and usually numerical methods, such as finite volume or finite element schemes are applied. We illustrated that \pinns\ provide an alternative approach and we focus on various setups from continuous to discrete time schemes. In particular, the space time networks we employed showed good performance and should be combined in the future with more advanced time-stepping schemes such the family of Runge-Kutta schemes.
A performance analysis for various parameter identification tasks should become a topic of future research. Focusing on increased computational performance especially when the PDEs will be solved on large-scale networks is another task that requires further attention in the future. For this more sophisticated schemes as well as more tailored implementation for GPUs should also be investigated. 

	\begin{backmatter}

		\section*{Funding}
        Jan Blechschmidt acknowledges the support of the German Federal Ministry of Education and Research (BMBF) as part of the project SOPRANN -- Synthese optimaler Regelungen und adaptiver Neuronaler Netze für Mobilitätsanwendungen (05M20OCA). Martin Stoll is partially supported by the
        BMBF grant 01|S20053A (project SA$\ell$E). 
		
		
		\bibliographystyle{bmc-mathphys} 
		\bibliography{bib_graphPINNs}      
		

	\end{backmatter}
\end{document}